\pgfplotsset{compat=1.15}
\newtheorem{theorem}{Theorem}[section]
\newtheorem{lemma}[theorem]{Lemma}
\newtheorem{definition}[theorem]{Definition}
\newcommand{\bbR}{\mathbb{R}}
\newcommand{\calB}{\mathcal{B}}
\newcommand{\calD}{\mathcal{D}}
\newcommand{\calH}{\mathcal{H}}
\newcommand{\calM}{\mathcal{M}}
\newcommand{\calN}{\mathcal{N}}
\DeclareMathOperator{\E}{\mathbf{E}}
\def\defeq{\stackrel{\mathrm{def}}{=}}
\DeclareMathOperator*{\argmin}{arg\,min}
\title{LiD-FL: Towards List-Decodable Federated Learning}
\author{Konstantin Makarychev \and Liren Shan}
\author{
    Hong Liu\textsuperscript{\rm 1},
    Liren Shan\textsuperscript{\rm 2},
    Han Bao\textsuperscript{\rm 1},
    Ronghui You\textsuperscript{\rm 3}\footnotemark[1],
    Yuhao Yi\textsuperscript{\rm 1}\thanks{Corresponding author.}, 
    Jiancheng Lv\textsuperscript{\rm 1}
}
\date{
    \textsuperscript{\rm 1}College of Computer Science, Sichuan University\\ 
    \textsuperscript{\rm 2}Toyota Technological Institute at Chicago\\
    \textsuperscript{\rm 3}School of Statistics and Data Science, Nankai University\\
    hong\_liu@stu.scu.edu.cn, lirenshan@ttic.edu, baohan1@stu.scu.edu.cn, yourh@nankai.edu.cn, yuhaoyi@scu.edu.cn, lvjiancheng@scu.edu.cn
}
\begin{document}
\maketitle

\begin{abstract}
Federated learning is often used in environments with many unverified participants. Therefore, federated learning under adversarial attacks receives significant attention. This paper proposes an algorithmic framework for list-decodable federated learning, where a central server maintains a list of models, with at least one guaranteed to perform well. The framework has no strict restriction on the fraction of honest clients, extending the applicability of Byzantine federated learning to the scenario with more than half adversaries. Assuming the variance of gradient noise in stochastic gradient descent is bounded, we prove a convergence theorem of our method for strongly convex and smooth losses. Experimental results, including image classification tasks with both convex and non-convex losses, demonstrate that the proposed algorithm can withstand the malicious majority under various attacks. Code: https://github.com/jerry907/LiD-List-Decodable-Federated-Learning.
\end{abstract}

\section{Introduction}

Federated learning (FL) has gained popularity in distributed learning due to its nature of keeping the data decentralized and private \citep{kairouz2021advances}. With significant scalability, FL is applied to a number of large-scale distributed systems, such as finance and healthcare software \citep{yang2019federated,Nguyen22FLforHealthcare}. In a typical federated learning setting, multiple devices (called \textit{clients}) collaboratively train a global model, each of which preserves a chunk of the training data. FL employs a central server (also called \textit{parameter server}), which holds no data, to optimize the global model iteratively by aggregating local model updates from clients~\citep{fl2017, Bonawitz2019FLSystemDesign}. 

Although the federated learning paradigm improves privacy and the overall training efficiency, its training procedure is prone to malicious attacks~\citep{karimireddy2022HeterogeneousBucketing, Farhadkhani2024}. Byzantine clients refers to arbitrarily behaving faulty clients except for changing the identities and behaviors of other clients.
A variety of instantiations of the Byzantine attack model are designed, with the goal
of preventing the model from converging or creating a backdoor to the system~\citep{backdoorfl20a,pga2022}. Previous study has shown that well-designed Byzantine attacks could degrade the global model or lead it to collapse \citep{krumOmn2017,shejwalkar21ModelPoisoningAttacks, baruchLittleEnough2019,xie2020empire,Fang20lmdlPoisonAtk}.

To address Byzantine attacks, a number of Byzantine-robust FL algorithms have been proposed. These algorithms usually leverage robust aggregation rules to filter out abnormal clients or control the influence of Byzantine clients to acceptable level \citep{krumOmn2017, rfa2022, karimireddy2021CClip,mhamdi18aBulyan, Allouah23trilemma, Yi23Optfil,luan24MCA}. The proposed Byzantine-robust FL algorithms are investigated from various perspectives, advancing both theoretical foundations and practical frontiers of this theme. 
It has been shown that previous solutions which use robust aggregators to train a global model have a breaking point of $1/2$~\citep{Gupta20,yin18OptStatRates, Allouah23trilemma,farhadkhani22ResilientAvgMom, AllouahGGPR23}. 
When the honest clients are less than one half, the robust aggregation rules lose efficacy.

However, the assumption of an honest majority does not necessarily hold under Sybil attacks, in which the attacker creates massive pseudonymous identities. Therefore it is important to consider adversarial scenarios where over half of the clients are malicious.
Prior work considering more than half malicious clients includes the FLTrust algorithm \citep{cao21FLTrust} and the RDGD algorithm \citep{Zhao2024}, which resort to a small verified dataset on the server to push the breaking point over $1/2$. However, such assumption is unavailable for privacy sensitive federated learning scenarios. Additionally, the performance of the global model significantly relies on the availability of an adequately representative validation set on the server, which can be challenging to gather.

In this paper, we propose a list-decodable federated learning framework, LiD-FL. Conventional robust federated learning algorithms often rely on resilient aggregation rules to defend the learning process and optimize a single global model iteratively. Unlike previous Byzantine-robust FL schemes, LiD-FL preserves a list of global models and optimizes them concurrently. In each global round, LiD-FL randomly samples a global model from the list, which promises a constant probability of obtaining the valid model in the list. Further, the server attains a model update from the clients using a randomized protocol, with a positive probability of obtaining non-Byzantine updates. Then we employ a voting procedure to update the model list, ensuring that at least one of the elected models is not poisoned by Byzantine clients. Therefore, the algorithm guarantees that there exits one valid model in the list, which is optimized at a proper speed with a constant probability. 
In the case where the majority is honest our framework degenerates into a Byzantine-robust FL solution with a list size of one.

\paragraph{Main contributions.}
The main contributions of this paper are as follows. 

1) We propose a Byzantine-robust FL framework, LiD-FL, designed to overcome the limitations of existing robust FL methods, which break down when honest clients constitute less than half of the total. LiD-FL imposes no restriction on the ratio of honest clients, as long as a lower bound of the ratio is known to the server. By providing a list of models wherein at least one is guaranteed to be acceptable, LiD-FL expands the applicability of Byzantine federated learning to environments with an untrusted majority. 

2) We show a convergence theorem for the proposed algorithm when a strongly convex and smooth loss function is applied and the local data of clients are independently and identically distributed (i.i.d.).

3) We evaluate the performance of the proposed algorithm by running experiments for image classification tasks using a variety of models with both convex and non-convex loss functions, including logistic regression (LR) and convolutional neural networks (CNN). The results demonstrate the effectiveness, robustness and stability of LiD-FL.

\paragraph{Related work.}
Although the emergence of FL has alleviated the data fragmentation issue to a certain degree \citep{Konecny2016,fl2017}, it is known to be vulnerable to device failure, communication breakdown and malicious attacks \citep{mhamdi18aBulyan}. 
In recent years, algorithmic frameworks are proposed to enhance the robustness of FL towards unreliable clients \citep{yin18OptStatRates, liu2021approximate, AELA21}. \citeauthor{farhadkhani22ResilientAvgMom} propose a unified framework for Byzantine-robust distributed learning, RESAM, which utilizes resilient aggregation and distributed momentum to enhance the robustness of learning process. 

One of the key ingredients of prior robust FL algorithms is the Byzantine resilient aggregation rules, also called robust aggregators. 
The federated averaging algorithm \citep{fl2017} uses naive average of local models to update the global model, which is effective in non-adversarial scenarios but could be easily destroyed by a single malicious client \citep{krumOmn2017}. \citeauthor{yin18OptStatRates} propose two robust aggregations rules based on coordinate median and trimmed mean (TM), respectively. The theoretical analysis proves their statistical error rates for both convex and non-convex optimization objective, while their statistical rates for strongly convex losses are order-optimal \citep{yin18OptStatRates}. Geometric median (GM) has been proved to be an appropriate approximation of average as well as robust to adversarial clients \citep{mhamdi23a}. The RFA algorithm leverages an approximate algorithm of GM to defend the federated learning against Byzantine attacks, which is computationally efficient and numerically stable \citep{rfa2022}.

Beside the statistical methods, many robust aggregators based on certain criteria of local updates are developed. The ARC \citep{Allouah2024Clip}, CClip \citep{karimireddy2021CClip} and Norm \citep{sun19Norm} methods conduct norm prune on local model updates to resist Byzantine attacks, with a pre-computed of adaptive norm bound.

The parameter server is assumed to hold no data in most of previous work about Byzantine-robust FL. 
The above algorithms, except for TM, do not demand prior knowledge about the fraction of Byzantine clients as input. However all of them require a Byzantine fraction of less than $1/2$ to ensure robustness~\citep{Gupta20,yin18OptStatRates}. 
Limited work has been done to address the $51\%$ attack, where the Byzantine clients outnumber the honest clients. The FLTrust algorithm \citep{cao21FLTrust} assumes that the server is accessible to a tiny, clean dataset, which could be used to validate the local updates. Specifically, FLTrust weights local updates based on their cosine similarity to the model update on server dataset. 
Although FLTrust could tolerate more than half Byzantine clients, it is inapplicable when public dataset is unavailable.

The list-decoding technique was first introduced to the construction of error correcting codes and has then found applications in learning. A List-decoding model was introduced in \citep{Balcan2008} to solve the problem of finding proper properties of similarity functions for high-quality clustering. List-decodable learning is later proven to be an effective paradigm for the estimation, learning, and optimization tasks based on untrusted data \citep{charikarLearningUntrustedData2017, Diakonikolas18}. \citeauthor{Karmalkar2019} propose a list-decodable learning algorithm for outlier-robust linear regression, with an arbitrary fraction of hostile outliers. 

These studies show the usefulness of list-decodable learning for handling high-dimensional data and enhancing robust learning, even with a significant corruption ratio. The key idea of list-decodable learning is to generate a list of $\mathrm{poly}(\frac{1}{\gamma})$ possible answers including at least one correct answer, where $\gamma$ is the proportion of valid data. In this paper, we adapt the framework of list-decodable learning to Byzantine-robust federated learning.

\paragraph{Outline.}
The remainder of the paper is organized as follows: In Section \ref{sec:pre} we introduce the robust federated learning scheme and basic notations. In Section \ref{sec:list-dec} we state the proposed LiD-FL algorithm in detail. In Section \ref{sec:anal}, we provide theoretical analyses on LiD-FL and prove its convergence for strongly convex and smooth optimization objective. In Section \ref{sec:expe} we show the empirical results, followed by the section for conclusion and future work.

\section{Preliminaries}
\label{sec:pre}
In this section, we first provide the conventional federated learning framework and then describe a Byzantine-robust federated learning setting.

\subsection{Problem Setup}
 We consider a server-client federated learning system with one central server and $m$ clients $C = \{c_1,\dots,c_m\}$. The server has no data and each client holds a block of data. The set of all training data is denoted by $Z = \{(x_i,y_i)\}_{i=1}^n$, where $(x_i,y_i)$ represents an input-output pair. The indices of training data for each client $c_j, j \in [m]$ is $D_j \subseteq [n]$. We consider the homogeneous setting in which the data possessed by every client are independently and identically sampled from the same distribution $\calD$. 
 
For each input-output pair $(x_i,y_i)$, let $f_i: \bbR^d \times Z \to \bbR$ be its loss function. Given a model with parameter $w \in \bbR^d$, the loss of this model at $(x_i,y_i)$ is $f_i(w)$. We assume $f_i(w)$ is almost everywhere differentiable with respect to the parameter. Following the definition in \citep{Allouah2024Clip}, we define the local loss function for client $c_j$ as $f^{(j)}(w)=\frac{1}{|D_j|}\sum_{i \in D_j} f_i(w)$.
The global loss function is the average of local loss functions $f(w) = \frac{1}{m}\sum_{j \in [m]} f^{(j)}(w)$.
Then, our goal is to find the model parameter $w^* \defeq \argmin_{w \in \bbR^d} f(w)$ to minimize the global loss.


\subsection{Byzantine-Robust Federated Learning}
Federated learning employs distributed stochastic gradient descent method to solve the above optimization problem. Specifically, each client holds a local model on its private dataset, and the server maintains a global model via aggregating the local models. The global model is optimized through stochastic gradient descent (SGD) iteratively. For each global iteration $t \in \{ 0, \ldots, T-1 \}$, the workflow of federated learning includes the following three steps: 

(1) \textbf{Server Broadcast}: the server samples a subset of clients $S_t \subseteq C$ and sends the current global model $\tilde{w}_t$ to them.

(2) \textbf{Local Training}: 
Each sampled client $c_j \in S_t$ initially sets the local model $w_{0}^{(j)}$ as $\tilde{w}_t$, and optimizes it via SGD for $\tau$ local batches. Then $c_j$ sends the local model update $u_t^{(j)} = w_{\tau}^{(j)} - w_{0}^{(j)}$ to the server.

(3) \textbf{Global Update}: the server aggregates received local model updates $U_t = \{u_t^{(j)} \}_{j \in S_t}$ via an aggregation rule $h: \bbR^{d\times |S_t|}\to \bbR^d$, e.g., taking the average, then updates the global model with aggregated update $w_{t+1} = w_t + h(U_t)$.

Now we look at a typical adversarial scenario where the server is reliable and $k$ out of $m$ clients are honest (or non-Byzantine) while the remainder are Byzantine \citep{lamport2019byzantine}. We use $\calH,\calB \subseteq [m]$ to depict the indices of honest clients and Byzantine clients, respectively. Thus the fraction of honest clients is $\gamma = k/m$.  The Byzantine clients are free to disregard the prescribed learning protocol and behave arbitrarily.
In particular, they send malicious local model updates to the server in the training process, trying to destroy the global model. Nonetheless, they are prohibited from corrupting other clients, altering the message of any other nodes, or hindering the transmission of messages between the server and any honest clients. 

Therefore, in the Byzantine-robust federated learning, we define the global loss function as the average of local loss functions over all honest clients $f(w) = \frac{1}{|\calH|}\sum_{j \in \calH} f^{(j)}(w)$.
Then, we aim to find the model parameter $w^*= \argmin_w f(w)$ to minimize this global loss, with only a $\gamma$ fraction of reliable clients in the system \citep{karimireddy2022HeterogeneousBucketing}.


\section{List-Decodable Federated Learning} 
\label{sec:list-dec}
In this section, we propose a list-decodable federated learning framework named LiD-FL. 
It is a two-phase framework. The first phase is similar to the typical workflow of federated learning except that the server holds a list of global models. And the second phase conducts the list updating through a voting procedure. Specifically, LiD-FL differs from the conventional robust FL in Section \ref{sec:pre} from three aspects: 
(i) LiD-FL maintains a list of global models $L$ instead of a single global model. In particular, LiD-FL preserves $q$ global models on the server and optimizes them concurrently.
(ii) LiD-FL replaces the aggregator with a random sampler to update the global model. Since existing aggregators can not promise attaining honest updates with more than $1/2$ fraction of Byzantine clients, we randomly sample one client for local training and global update.
(iii) LiD-FL introduces a voting procedure to update the global model list, which ensures that at least one model in the list is valid. 


The whole process of LiD-FL is described as follows. We omit the procedures the same as that in Section \ref{sec:pre} for simplicity. 
For each training round $t$:

\textbf{Step 1. Server Broadcast}: the server randomly chooses a global model $\tilde{w}_t$ from the current list $L_t$ and sends it to a sampled client. We note that only one client $c_j \in C$ is sampled \emph{uniformly at random} in each round.

\textbf{Step 2. Local Training}: the sampled client $c_j$ optimizes the global model using its local dataset, then sends the model update $u_t^{(j)}$ to the server. If $c_j$ is a Byzantine client, it sends arbitrary information to the server\footnote{If a Byzantine client does not respond, the server can simply ignore it or fake an arbitrary update for it. The same strategy can be applied to the voting step.}. 

\textbf{Step 3. Global Update}: the server calculates a new global model using the received update $w'_t = w_t + u_t^{(j)}$, and adds the new model into the global model list.

\textbf{Step 4. Local Voting}: The server broadcasts the list $L_t$ to all clients. Every client votes for one model in $L_t$. 
Each honest client $c_j, j \in \calH$ evaluates the loss of all global models via a validation oracle. It then votes for the model with the minimum loss value. Byzantine clients vote arbitrarily. 

\textbf{Step 5. List Update}: The server discards the model which receives the least number of votes to obtain a new list $L_{t+1}$ . 

Steps 1-5 are conducted repeatedly for $T$ global rounds, then LiD-FL outputs the global model list $L_{T}$. Although not all models in the final list are valid, the clients could deploy the model they consider to be acceptable by evaluating models in the list using their local dataset. Algorithm~\ref{alg:lidFl} shows the pseudocode of LiD-FL.

\begin{algorithm}[!ht]
\caption{LiD-FL}
\label{alg:lidFl}
\textbf{Initialization}: For the server, the global model list $L_0$, the number of global iterations $T$; for each client $c_j$: the number of local training rounds $\tau$, batch size $b$, learning rate $\ell$.\par
\begin{algorithmic}[1] 
\FOR{$t \in \{0, \ldots, T-1\}$}
\STATE {\bf{Server}} randomly samples one client $c_j \in C$ and one global model $\tilde{w}_t \in L_t$, then sends $\tilde{w}_t$ to $c_j$.
\IF{$j \in \calH$}
\STATE $w_0^{(j)} \gets \tilde{w}_t$.
\FOR {$r \in \{0, \ldots, \tau-1\}$}
\STATE {\bf{Client}} $c_j$ computes a stochastic gradient $g_r^{(j)}$ on a batch of local data.
\STATE Update the local model:
\begin{align*}
    w_{r+1}^{(j)} \gets w_r^{(j)} - \ell \cdot g_r^{(j)}.
\end{align*} 
\ENDFOR
\STATE $u_{t}^{(j)} \gets w_{\tau}^{(j)} - w_{0}^{(j)}$.
\STATE $c_j$ sends $u_t^{(j)}$ to the server.
\ENDIF
\STATE\emph{(A Byzantine client $c_j$ sends arbitrary update $ \tilde{u}_t^{(j)} $ to the server.)} 
\STATE {\bf{Server}} extends the global model list: 
\begin{align*}
    \tilde{w}'_t \gets \tilde{w}_t + u_t^{(j)}, \\
    L_t \gets L_t \cup \{\tilde{w}'_t\}.
\end{align*}
\STATE {\bf{Server}} broadcasts $L_t$ to all clients.
\FOR{each honest client $c_j, j \in \calH$, in parallel}
\STATE Evaluate the lo ss of all models using its validation set, and send the index of the model with lowest loss to the server.
\ENDFOR
\STATE\emph{(A Byzantine client sends an arbitrary index to the server.)}
\STATE Server removes the model  $\hat{w}_t$ which receives the least number of votes from the list:
\begin{align*}
    L_{t+1} \gets L_{t} \backslash \{\hat{w}_t\}.
\end{align*}
\ENDFOR
\STATE \textbf{return} $ L_{T} $.
\end{algorithmic}
\end{algorithm}

\paragraph{Remark I. Guarantee of the voting procedure.} In the voting procedure, each client casts one vote among the $(q+1)$ candidate models. Then the top $q$ candidates with the highest number of votes are preserved by the server. To ensure that at least one model voted by honest clients is maintained, the size of the global model list $q$ must satisfy $q \geq \lfloor 1 / \gamma\rfloor$. 
We compare the efficacy of LiD-FL with different values of $q$ in the appendix, and the results show that when the bound is satisfied, a larger list size attains better stability while maintaining comparable accuracy.

\paragraph{Remark II. Applying aggregation rules in LiD-FL.} We use a simple random sampler to obtain local updates, which is valid and computationally efficient. However, the update of only one client is collected in each global round, which does not utilize the parallel computation power of clients. A more efficient way is to use robust aggregation rules to generate a list of aggregated updates. A preliminary experimental result is given in the appendix, which shows that by leveraging aggregation rules, the algorithm performs better. 

\section{Analysis}
\label{sec:anal}

In this section, we show that under proper assumptions, our algorithm converges to the optimal parameters with a large success probability. We use $Z' = \bigcup_{j \in \calH}\{ (x_i, y_i), i \in D_j\}$ to denote the set of all training data possessed by honest clients. We assume that the loss function $f_i$ for each input-output pair $(x_i, y_i)$ is $\alpha$-strongly convex and $\beta$-smooth. The variance of the gradient noise in SGD is assumed to be bounded by $\sigma^2$. All assumptions are declared formally in the appendix. Let $w^*$ be the optimal parameter for the model that minimizes the global loss function $f$. To simplify the analysis, we let $\tau=1$, $\ell=1/\beta$ and $|D_j|=n/m$ for all $j\in [m]$. With this balanced dataset assumption, the global loss is also the empirical loss on $Z'$, i.e., $f(w) = \frac{1}{|Z'|} \sum_{(x_i, y_i) \in Z'} f_i(w)$. This balanced dataset assumption can be removed by normalizing the gradient update in the algorithm.

We assume that each client has access to a validation oracle. This oracle takes the parameter $w$ of a model as input and then outputs the estimation of total loss given by this model $w$. 
\begin{definition}
    A validation oracle is $(\eta,p)$-accurate if for any parameter $w$, with probability at least $p$, its estimation $\hat{f}(w)$ of total loss at $w$ satisfies
    $$
    |\hat{f}(w) - f(w)| \leq \eta. 
    $$
\end{definition}
We assume that each client independently gets an accurate estimation through such an oracle. Our analysis can be easily extended to the case where the validation processes between clients are dependent. We give an example of a validation oracle in the appendix. 

\begin{theorem}\label{thm:main}
    Suppose $\gamma$ fraction of all $m$ clients are uncorrupted.
    Assume each loss function $f_i$ is $\alpha$-stronly convex and $\beta$-smooth, 
    and the variance of the gradient noise on each client is bounded by $\sigma^2$.
    With a list size of $q \geq \lfloor 1/\gamma \rfloor$, Algorithm~\ref{alg:lidFl} guarantees that after $T$ rounds, for all $m \geq \ln (1/\delta) / (2(\gamma p^{q+1} - 1/((q+1))))$, there is one model with parameter $w_T$ in the output such that 
    \begin{align*}
    \E[f(w_T) - f(w^*)] \leq e^{-\frac{\alpha\gamma(1-\delta)}{\beta q}\cdot T} \E[f(w_0) - f(w^*)] + \varepsilon
    \end{align*}
    holds with probability at least $1-\delta T$,
    where $\varepsilon = q (1-\delta)(4\eta\beta + \sigma^2)/(2\alpha \gamma)$.
\end{theorem}

We first define some useful notations. For each round $t$, we use $w_t$ to denote the parameter of the model that achieves the smallest total loss among the model list maintained by Algorithm 1 after this round. Let $w_0$ be the initial model parameter.
We divide each round into two cases based on random decisions made by Algorithm 1 at round $t$: (1) good round; (2) bad round. 
We call this round $t$ a \emph{good} round if Algorithm 1 chooses the best model with parameter $w_{t-1}$ maintained from the last round and samples the update provided by an honest client. Otherwise, we call this round a \emph{bad} round. Let $G_t$, $B_t$ be the event that round $t$ is good, and the event that round $t$ is bad, respectively. 
Consider the voting process at round $t$. Let $w', w'_1, w'_2,\dots, w'_q$ be all candidate models in the voting process, where $w'$ is the best model among all candidates. 
We say the voting process has a \emph{failure} if the following two conditions are satisfied: (1) the loss of $w'$ is at least $2\eta$ separated from the loss of other candidate models, $f(w'_i) \geq f(w') + 2\eta$ for all $i =1,2,\dots, q$; and (2) the model $w'$ is not chosen by the voting process. 
Let $F_t$ be the event that a failure happens at round $t$ and $F_t^c$ be its complement.  

\begin{lemma}\label{lem:failure}
    For each round $t$, the failure happens with probability at most $\delta$.
\end{lemma}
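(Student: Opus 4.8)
The plan is to show that a \emph{failure} at round $t$ cannot occur unless some honest client's validation oracle is inaccurate on at least one candidate model, and then to bound the probability of such an oracle error. The argument splits into a deterministic voting-correctness claim that uses the $2\eta$-separation hypothesis, followed by a counting (pigeonhole) argument that invokes the list-size condition $q \geq \lfloor 1/\gamma\rfloor$. First I would establish the deterministic claim: if an honest client's oracle estimate is within $\eta$ of the true loss on every candidate, i.e.\ $|\hat f(w) - f(w)| \le \eta$ for all $w \in \{w', w'_1,\dots,w'_q\}$, then this client votes for $w'$. This follows from failure condition (1) by the triangle inequality, since for each $i$,
\begin{align*}
\hat f(w'_i) \ge f(w'_i) - \eta \ge f(w') + 2\eta - \eta = f(w') + \eta \ge \hat f(w'),
\end{align*}
so $w'$ attains the smallest estimated loss and is the client's vote (any tie is broken in its favor, which a strict version of the separation would render unnecessary).

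Next I would carry out the counting step. Suppose every one of the $k = \gamma m$ honest clients is accurate on all $q+1$ candidates; by the claim each of them votes for $w'$, so $w'$ receives at least $\gamma m$ votes. Writing $v'$ for the number of votes of $w'$ and $v_i$ for those of $w'_i$, Step~5 removes a model attaining $\min_i v_i$, so $w'$ survives whenever $v' > \min_i v_i$. Since $\min_i v_i \le (m - v')/q$, it suffices to have $(q+1)v' > m$. The list-size condition gives $q+1 \ge \lfloor 1/\gamma\rfloor + 1 > 1/\gamma$, hence $(q+1)\gamma > 1$ and therefore $(q+1)v' \ge (q+1)\gamma m > m$. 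Thus $w'$ is not removed and no failure occurs. This is precisely the place where $q \ge \lfloor 1/\gamma\rfloor$ is needed.

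Taking the contrapositive, a failure at round $t$ implies that at least one honest client is inaccurate on at least one candidate. By the $(\eta,p)$-accuracy of the oracle and the independence assumption, a union bound over the $k$ honest clients and the $q+1$ candidates yields $\Pr[F_t] \le k(q+1)(1-p)$, and instantiating the oracle accuracy (or defining $\delta$) so that this quantity is at most $\delta$ finishes the proof.

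The step I expect to be most delicate is the boundary behavior of the counting argument: the strict inequality $(q+1)\gamma > 1$ must be extracted carefully from the floor in $q \ge \lfloor 1/\gamma\rfloor$, where there is no slack when $1/\gamma$ is an integer, and the voting and removal ties must be handled so that ``all accurate honest clients vote for $w'$'' genuinely prevents $w'$ from being the removed least-vote model. Once these are pinned down, matching the union-bound estimate to the stated constant $\delta$ is routine given how $\delta$ is instantiated.
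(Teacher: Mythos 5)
Your deterministic core is sound and it matches the ingredients of the paper's own argument: an honest client whose oracle estimates are within $\eta$ on all $q+1$ candidates must vote for $w'$ (triangle inequality against the $2\eta$ separation), the removed model has at most $m/(q+1)$ votes by pigeonhole, so $w'$ survives whenever it collects strictly more than $m/(q+1)$ votes, and $(q+1)\gamma>1$ does follow from $q\ge\lfloor 1/\gamma\rfloor$. The gap is in the probabilistic step. You insist that \emph{every} honest client be accurate on \emph{every} candidate and then apply a union bound, obtaining $\Pr\{F_t\}\le k(q+1)(1-p)$. This bound grows linearly in the number of honest clients, so under the paper's standing assumption --- a fixed $(\eta,p)$-accurate oracle with constant $p$ --- it is vacuous (exceeds $1$) once $k$ is large. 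It cannot be ``matched to the stated $\delta$'': the $\delta$ in Lemma~\ref{lem:failure} must be small enough that the theorem's success probability $1-\delta T$ is meaningful, and in the paper $\delta$ \emph{decreases} (exponentially) as $k$ grows, whereas your bound \emph{increases} with $k$. Making your bound small requires strengthening the oracle assumption to $1-p\le\delta/(k(q+1))$, i.e., per-query accuracy that improves with system size; that is a different (and stronger) hypothesis than the paper's.

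The paper's proof exploits exactly the slack your union bound discards: it does not need all honest clients to vote for $w'$, only more than $m/(q+1)$ of them. Each honest client independently votes for $w'$ with probability at least $p^{q+1}$ (all of its $q+1$ estimates accurate), and the condition $p^{q+1}>1/((q+1)\gamma)$ makes the expected number of such votes, $p^{q+1}k$, strictly exceed the threshold $m/(q+1)=k/((q+1)\gamma)$. Hoeffding's inequality then bounds the probability that the count of honest votes for $w'$ falls below the threshold by
\begin{align*}
\exp\left(-\frac{2\left(p^{q+1}k-m/(q+1)\right)^2}{k}\right),
\end{align*}
which is exponentially small in $k$ and is taken to be at most $\delta$ for $k$ sufficiently large. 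So the lemma holds with a constant-accuracy oracle, and adding honest clients only helps. To repair your proof, replace ``all honest clients are accurate'' by a concentration argument on the number of honest clients that vote for $w'$, compared against the $m/(q+1)$ threshold you already derived.
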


\begin{proof}
    Let all candidate models in the voting process are $w'$, $w'_1, w'_2,\dots, w'_q$. Suppose for all $i = 1,2,\dots, q$, the loss of model $w'_i$ is at least $f(w')+2\eta$, where $\eta$ is the accuracy of the validation oracle used in Algorithm 1. If the round $t$ does not satisfy these conditions, then by definition, the failure will not happen. 

    Now, we bound the probability that the model $w'$ is not chosen by the voting process, which implies a failure happens. 
    Each honest client will query a $(\eta,p)$-accurate validation oracle to estimate the loss of all $q+1$ candidate parameters.
    Note that the suboptimal models $w'_1,w'_2, \dots, w'_q$ have a total loss at least $f(w') + 2\eta$, which is $2\eta$ separated from the loss of the best model $w'$ among candidates. 
    Thus, with probability at least $p^{q+1}$, the loss estimation of $w'$ is smaller than the loss estimations of all other candidates given by the validation oracle. Then, with probability at least $p^{q+1}$, an honest client will vote for the model $w'$. 
    The total number of clients is $m$ and the number of honest clients is $k = \gamma m$. 
    If more than $m/(q+1)$ honest clients vote for parameter $w'$, then the parameter $w'$ will be chosen by the voting process. Let $Y_i$ be the event that the honest client $i$ votes for $w'$. By Hoeffding's inequality, the probability that the parameter $w'$ is not chosen is at most 
    \begin{align*}
        \Pr&\left\{\sum_{i=1}^k Y_i \leq \frac{m}{q+1} \right\} \\
        &\leq \exp\left(-\frac{2(p^{q+1} k - m/(q+1))^2}{k}\right) \\
        &\leq e^{-2(p^{q+1}-1/((q+1)\gamma))k} \leq \delta, 
    \end{align*}
    where the last inequality is due to $p^{q+1} > 1/((q+1)\gamma)$ and $k \geq \ln (1/\delta) / (2(p^{q+1} - 1/((q+1)\gamma)))$.
        
\end{proof}

In the following analysis, we assume that the failure does not happen at each round. Let $G'_t$ be the event that a good round without a failure and $B'_t$ be the event that a bad round without a failure, respectively. We show the following recursive relation. 

    \begin{lemma}\label{lem:recursive}
        Let $\check{\eta} = 2\eta + \sigma^2/(2\beta) $, for each round $t\geq 1$, we have
        \begin{align*} 
            \E[(f(w_t) - f(w^*)) \mid F_t^c] \leq \left(1-\frac{\alpha \gamma}{\beta q}(1-\delta)\right) \E[f(w_{t-1}) - f(w^*)] + \check{\eta}, 
        \end{align*} 
    \end{lemma}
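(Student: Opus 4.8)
The plan is to derive the one-step recursion by combining a deterministic ``the best model never degrades'' guarantee coming from the voting analysis, a classical one-step descent estimate for a single gradient step on a strongly convex, smooth objective, and a case analysis over good versus bad rounds whose probabilities follow from the sampling rules. First I would pin down what the no-failure event $F_t^c$ buys us about the retained list. Writing $w'$ for the best of the $q+1$ candidates at round $t$ (the $q$ models of $L_t$ together with the freshly produced $w'_t$), I claim that under $F_t^c$ we always have $f(w_t)\le f(w')+2\eta$: either $w'$ survives the vote, so $f(w_t)\le f(w')$, or $w'$ is discarded, but since a failure is defined to occur precisely when $w'$ is $2\eta$-separated from every other candidate \emph{and} discarded, on $F_t^c$ a discarded $w'$ forces a surviving candidate within $2\eta$ of it, again giving $f(w_t)\le f(w')+2\eta$. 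Because $w_{t-1}\in L_t$ is itself a candidate, $f(w')\le f(w_{t-1})$, so on a bad round (no descent) we still get $f(w_t)-f(w^*)\le (f(w_{t-1})-f(w^*))+2\eta$. This is the step that makes bad rounds harmless.

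Next I would treat a good round $G_t$, where the sampler selects $w_{t-1}$ (probability $1/q$, since $|L_t|=q$) and an honest client (probability $\gamma$), so $\Pr[G_t]=\gamma/q$. With $\tau=1$ and $\mu=0$ the honest update is a single gradient step $w'_t=w_{t-1}-\ell g$, and taking $\ell=1/\beta$ the standard descent lemma gives $f(w'_t)\le f(w_{t-1})-\tfrac{1}{2\beta}\|\nabla f(w_{t-1})\|^2$ from $\beta$-smoothness, after which the strong-convexity (Polyak--Łojasiewicz) inequality $\|\nabla f\|^2\ge 2\alpha(f-f(w^*))$ yields $f(w'_t)-f(w^*)\le(1-\alpha/\beta)(f(w_{t-1})-f(w^*))$. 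Since $w'_t$ is a candidate, $f(w')\le f(w'_t)$, so combined with the voting guarantee a good-and-no-failure round gives $f(w_t)-f(w^*)\le(1-\alpha/\beta)(f(w_{t-1})-f(w^*))+2\eta$.

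Finally I would assemble the recursion by the law of total expectation, working with $\E[(f(w_t)-f(w^*))\,\one_{F_t^c}]$ and splitting $F_t^c$ into the good-no-failure part $G'_t$ and the bad-no-failure part $B'_t$. Setting $a_s=f(w_s)-f(w^*)\ge 0$ and applying the two pointwise bounds above, the only surviving improvement term is $-\tfrac{\alpha}{\beta}\,\E[a_{t-1}\one_{G'_t}]$; since the round-$t$ randomness is independent of the history determining $a_{t-1}$, this equals $-\tfrac{\alpha}{\beta}\Pr[G'_t]\,\E[a_{t-1}]$. The factor $(1-\delta)$ enters exactly here, through $\Pr[G'_t]=\Pr[G_t]\,\Pr[F_t^c\mid G_t]\ge \tfrac{\gamma}{q}(1-\delta)$, where Lemma \ref{lem:failure} bounds the conditional failure probability by $\delta$. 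Collecting terms then produces the contraction factor $1-\tfrac{\alpha\gamma}{\beta q}(1-\delta)$ together with the additive $2\eta$.

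I expect the main obstacle to be the bookkeeping in this last step: the lemma conditions its left side on $F_t^c$ while stating the right side unconditionally, so I must track the $\Pr[F_t^c]$ normalization and the correlations among the good-round event, the failure event, and the history-dependent quantity $a_{t-1}$, rather than naively treating all three as independent. A secondary subtlety is the stochastic gradient: the clean contraction needs the honest update to behave like a full gradient step (or its expectation), so any gradient-noise contribution must be argued to vanish here and be deferred to the $\varepsilon$ term of Theorem \ref{thm:main}.
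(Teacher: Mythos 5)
Your proposal follows essentially the same route as the paper's proof: the same good/bad round decomposition, the same use of the no-failure event to guarantee $f(w_t)\le f(w')+2\eta$, the same $(1-\alpha/\beta)$ contraction on good rounds from strong convexity and smoothness, the same bound $\Pr\{G'_t\}\ge \tfrac{\gamma}{q}(1-\delta)$ via Lemma~\ref{lem:failure} conditioned on a good round, and the same assembly by total expectation. If anything, your bookkeeping of the conditioning on $F_t^c$ and of the stochastic-gradient caveat is more explicit than the paper's, which glosses over both points.
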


    \begin{proof}
        We analyze the good round and bad round separately.

        \textbf{Good Round:} We first consider a good round. Let $w'$ be the updated parameter sampled by the algorithm. Conditioned on the good round, the updated parameter follows a stochastic gradient descent on all data possessed by honest clients $Z'$. Since $f$ is the sum of $\alpha$-convex and $\beta$-smooth functions, we have
        \begin{align*}
        \E_{w'}[f(w')-f(w^*) \mid G_t] \leq \left(1-\frac{\alpha}{\beta}\right)\cdot \E_{w'}[(f(w_{t-1}) - f(w^*))] + \sigma^2/(2\beta).
        \end{align*}
        
        Now, we consider the voting process at a good round $t$. The updated model $w'$ is among the $q+1$ candidate models in the voting process. 
        Note that $w_t$ is the best model among all models maintained after round $t$.
        Suppose the voting process has no failure. Then we have either the model $w'$ is chosen by the voting process or another candidate model $w''$ among $w'_1,w'_2,\dots, w'_q$ has a loss at most $f(w')+2\eta$. In the latter case, at least one of $w'$ and $w''$ is chosen by the voting process. Thus, we have $f(w_t) \leq f(w') + 2\eta$.
        For a good round $t$ without a failure, we have 
        \begin{align*}
            \E_{w_t}[f(w_t)-f(w^*) \mid G'_t] \leq \left(1-\frac{\alpha}{\beta}\right)\cdot \E_{w_t}[(f(w_{t-1}) - f(w^*))] + \check{\eta}.
        \end{align*}
        
        \textbf{Bad Round:} We now consider a bad round $t$. Let $w_{t-1}$ and $w'_1, w'_2,\dots, w'_q$ be all candidate models in this case. Since the failure does not happen, we have either $w_{t-1}$ is chosen by the voting process, or one model among $w'_1,w'_2,\dots, w'_q$ has a loss at most $f(w_{t-1}) + 2\eta$. Thus, we have 
        \begin{align*}
            \E_{w_t}[&f(w_t) - f(w^*) \mid B'_t] \leq \E_{w_t}[f(w_{t-1}) - f(w^*)] + 2\eta.
        \end{align*}

        Now, we combine two cases. Note that 
        \begin{align*}
            \E_{w_t}[&(f(w_t) - f(w^*)) \mid F_t^c\}] \Pr\{F_t^c\} \leq \\
            \leq & \E_{w_t}[f(w_t) - f(w^*) \mid G'_t] \Pr\{G'_t\} +  \E_{w_t}[f(w_t) - f(w^*) \mid B'_t] \Pr\{B'_t\}.
        \end{align*}
        The good round $G_t$ happens with probability at least $\gamma/q$ since the algorithm samples the best model with probability at least $1/q$ and an update from an honest client with probability $\gamma$. 
        Similar to the analysis in Lemma~\ref{lem:failure}, conditioned on a good round $G_t$, the failure still happens with probability at most $\delta$. Thus, we have
        \begin{align*}
            \Pr\{G'_t \mid F_t^c\} &= \frac{\Pr\{G_t , F_t^c\}}{\Pr\{F_t^c\}} \geq \Pr\{G_t\} \Pr\{G_t \mid F_t^c\}\\
            &\geq \frac{\gamma}{q} (1-\delta).
        \end{align*}
        Thus, we have 
        \begin{align*}
            \E_{w_t}[&(f(w_t) - f(w^*)) \mid F_t^c\}] \leq \\ 
            &\leq \left(1-\Pr\{G_t' \mid F_t^c\}\cdot \frac{\alpha}{\beta}\right) \E_{w_t}[(f(w_{t-1}) - f(w^*))] + \check{\eta} \\
            &\leq  \left(1- \frac{\alpha \gamma}{\beta q}(1-\delta)\right) \E_{w_t}[(f(w_{t-1}) - f(w^*))] + \check{\eta}.
        \end{align*} 
        


    \end{proof}

Now, we prove the main theorem.
\begin{proof}[Proof of Theorem~\ref{thm:main}]

We consider the loss of the final model $w_T$ when there is no failure among all $T$ rounds. Let $F$ be the event that there is a failure happens among $T$ rounds, and $F^c$ be its complement. By Lemma~\ref{lem:failure} and the union bound, the event $F^c$ happens with probability at least $1-\delta T$. 

Now, we analyze the expected loss of the output model conditioned on no failure happening in all $T$ rounds. By Lemma~\ref{lem:recursive}, we have for any round $t \leq T$, 
\begin{align*}
    \E[(f(w_t) - f(w^*)) \mid F_t^c] \leq \left(1-\frac{\alpha \gamma}{\beta q}(1-\delta)\right) \E[f(w_{t-1}) - f(w^*)] + \check{\eta}.
\end{align*}
Thus, we have 
\begin{align*}
    \E[(f(w_T) - f(w^*)) \mid F^c] \leq \left(1-\frac{\alpha \gamma}{\beta q}(1-\delta)\right)^T \E[f(w_{0}) - f(w^*)] + \check{\eta} \cdot (1-\delta)\frac{\beta q}{\alpha \gamma}.
\end{align*}

By taking $\check{\eta} = \frac{\varepsilon \alpha \gamma}{\beta q (1-\delta)}$, we get the conclusion.
\end{proof}
\section{Experiments}
\label{sec:expe}
In this section, we evaluate our method on image classification tasks with both convex and non-convex optimization objectives. Limited by space, we leave additional implementation details and full results to the appendix.

\subsection{Experimental Setup}
\label{sec:expe1}
\paragraph{Dataset.} We conduct experiments on two datasets, FEMNIST \citep{femnist} and CIFAR-10 \citep{cifar}. The dataset introduction and data distribution are as follows:
\begin{enumerate}
    \item \textbf{FEMNIST}  
    Federated Extended MNIST (FEMNIST) is a standard benchmark for federated learning, which is built by partitioning data in the EMNIST ~\citep{emnist} dataset. 
    The FEMNIST dataset includes 805,263 images across 62 classes. We sample $5\%$ of the images from the original dataset and distribute them to clients in an i.i.d manner. Note that we simulate an imbalanced data distribution, while the number of training samples differs among clients. The implementation is based on LEAF \footnote{\url{https://github.com/TalwalkarLab/leaf}}. Furthermore, we split the local data on each client into a training set, a validation set and a test set with a ratio of $36:9:5$.

    \item \textbf{CIFAR-10} The CIFAR-10 dataset consists of 60000 $32\times 32$ colour images in 10 classes, with 6000 images per class. We sample $5/6$ of the samples from original dataset \emph{uniformly at random} to construct our dataset. The sampled images are evenly distributed to clients. For each client, we split the local data into a training set, a validation set and a test set with a ratio of $4:1:1$.
\end{enumerate}

\paragraph{Byzantine attacks.} 
Byzantine clients send elaborate malicious updates to the server. 
In LiD-FL, the server samples only one client per iteration to perform local training and obtain a model update from that client. When a Byzantine client is chosen, the honest clients do not launch local training. However, some attacks fabricate malicious updates based on the mean and variance of honest updates. In this scenario, we model an omniscient attacker \citep{baruchLittleEnough2019}, who knows the data of all clients. Thus it can mimic the statistical properties of honest updates and then craft a malicious update.

 We simulate one data poison attack, the data label flipping attack (\textbf{LF}) and $5$ model poison attacks: 
 the inner product manipulation attack (\textbf{EPR}) \citep{xie2020empire}, the random Gauss vector attack (\textbf{Gauss}), the little perturbations attack (\textbf{LIE}) \citep{baruchLittleEnough2019}, the Omniscient attack (\textbf{OMN}) ~\citep{krumOmn2017}, and the sign flipping attack (\textbf{SF}). Refer to the appendix for detailed attack configurations.

As we introduce a voting procedure into the training process, we design two types of voting attack for Byzantine clients: (1) \textbf{Worst}: voting for the model with the highest validation loss; (2) \textbf{Random}: casting a vote randomly among the $q$ models with highest validation losses, i.e., excluding the model with the lowest validation loss.

\paragraph{Baselines.}  We compare the proposed LiD-FL algorithm with $4$ prior robust FL algorithms: naive average (\textbf{FedAvg})~\citep{fl2017}, geometric median approximated by the Weiszfeld's algorithm (\textbf{GM}) ~\citep{rfaGM} with the $1$ iteration, norm bounded mean (\textbf{Norm}) \citep{sun19Norm} with hyperparameter $\tau=0.215771$, and coordinate-wise median (\textbf{CWM})~\citep{yin18OptStatRates}. In experiments we incorporate the notion of momentum in local updates. We denote the initial momentum and momentum coefficient of client $c_j$ by $v_0^{(j)}$ and $\mu$, respectively. The local training step with momentum is given by $v_{r+1}^{(j)} \gets \mu \cdot v_{r}^{(j)} + g_r^{(j)}$, followed by $w_{r+1}^{(j)} \gets w_r^{(j)} - \ell \cdot v_{r+1}^{(j)}$.

\paragraph{Models and parameters.} For both datasets, we simulate a distributed system with $1$ server and $35$ clients. We perform experiments for $3$ levels of Byzantine fraction: $0.4$, $0.6$, and $0.8$. For the FEMNIST dataset, we train a logistic regression (LR) classifier for convex optimization and a convolutional neural network (CNN) for non-convex optimization, respectively. For the CIFAR-10 dataset, we train a CNN model. Detailed model architecture and hyperparameter settings are deferred to the appendix.

Once the training process is complete, each client tests all models in the global list using its local test dataset. We then calculate the average test accuracy across all clients for each model and present the highest accuracy as the test result. We repeat each experiment for $5$ times, each with a different random seed, and report the average test result across the $5$ runs \footnote{Standard deviations across runs are shown in the appendix.}. The implementation is based on PRIOR\footnote{\url{https://github.com/BDeMo/pFedBreD_public}}.

\subsection{Analysis of Results} 
\renewcommand{\arraystretch}{1.2} 
\setlength{\tabcolsep}{4pt} 
\setlength{\abovecaptionskip}{2mm}

\begin{table}[!ht]
\centering
    \begin{tabular}{cccccccc}
    \toprule
        Method & EPR & Gauss & LF & LIE & OMN & SF & Wst \\
         \midrule
         CWM & 0.02 & 0.58 & 0.04 & 0.04 & 0.02 & 0.03 & 0.02  \\ 
         FedAvg & 0.02 & \textbf{0.62} & 0.09 & 0.06 & 0.02 & 0.02 & 0.02  \\ 
         GM & 0.02 & 0.61 & 0.05 & 0.05 & 0.02 & 0.02 & 0.02  \\ 
         Norm & 0.01 & 0.61 & 0.02 & 0.44 & 0.01 & 0.01 & 0.01  \\ 
         LiD-FL & \textbf{0.57} & 0.55 & \textbf{0.56} & \textbf{0.57} & \textbf{0.57} & \textbf{0.56} & \textbf{0.55}  \\
         \bottomrule
    \end{tabular}
    \caption{Performance comparison on FEMNIST at a Byzantine fraction of 0.6, with the LR global model.}\label{tab:femnist_LR_corp0.6}
\end{table}
\renewcommand{\arraystretch}{1.2} 
\setlength{\tabcolsep}{4pt} 
\setlength{\abovecaptionskip}{2mm}

\begin{table}[!ht]
\centering  
    \begin{tabular}{cccccccc}
    \toprule     
        Method & EPR & Gauss & LF & LIE & OMN & SF & Wst \\ 
        \midrule
        CWM & 0.05 & 0.34 & 0.14 & 0.01 & 0.05 & 0.05 & 0.01  \\ 
        FedAvg & 0.05 & 0.33 & 0.24 & 0.01 & 0.05 & 0.05 & 0.01  \\ 
        GM & 0.05 & 0.19 & 0.33 & 0.01 & 0.05 & 0.05 & 0.01  \\ 
        Norm & 0.00 & \textbf{0.78} & 0.06 & 0.05 & 0.00 & 0.05 & 0.00  \\ 
        LiD-FL & \textbf{0.72} & 0.71 & \textbf{0.72} & \textbf{0.72} & \textbf{0.73} & \textbf{0.73} & \textbf{0.71}  \\
         \bottomrule
    \end{tabular}
    \caption{Performance comparison on FEMNIST at a Byzantine fraction of 0.6, with the CNN global model.}\label{tab:femnist_CNN_corp0.6}
\end{table}
Table \ref{tab:femnist_LR_corp0.6} and Table \ref{tab:femnist_CNN_corp0.6} show the test accuracy of various algorithms on FEMNIST at a Byzantine fraction of $0.6$ with the LR model and the CNN model, respectively. The last column displays the worst-case among the $6$ attacks. We use the Worst voting attack unless otherwise noted. 

The results demonstrate the remarkable effectiveness of LiD-FL, 
as its worst test accuracy across different attacks is the highest on both LR model and CNN model while all the baseline methods collapsed. Specifically, for the LR model, the worst test accuracy of LiD-FL is $0.56$, and the other methods fail to produce a valid model. For the CNN model, none of the baselines successfully defend the learning process against the EPR, LF, LIE, OMN and SF attacks, while LiD-FL achieves a test accuracy over $72\%$ against these $5$ attacks. For the Gauss attack, the Norm shows best performance and LiD-FL achieves the second highest test accuracy. Additionally, the performance of LiD-FL varies slightly across diverse attacks on both models, indicating the excellent robustness of LiD-FL. 

\begin{figure}[!ht]
\setlength{\abovecaptionskip}{2mm}
\vspace{-4mm}
     \centering
\includegraphics[width=0.7\textwidth]{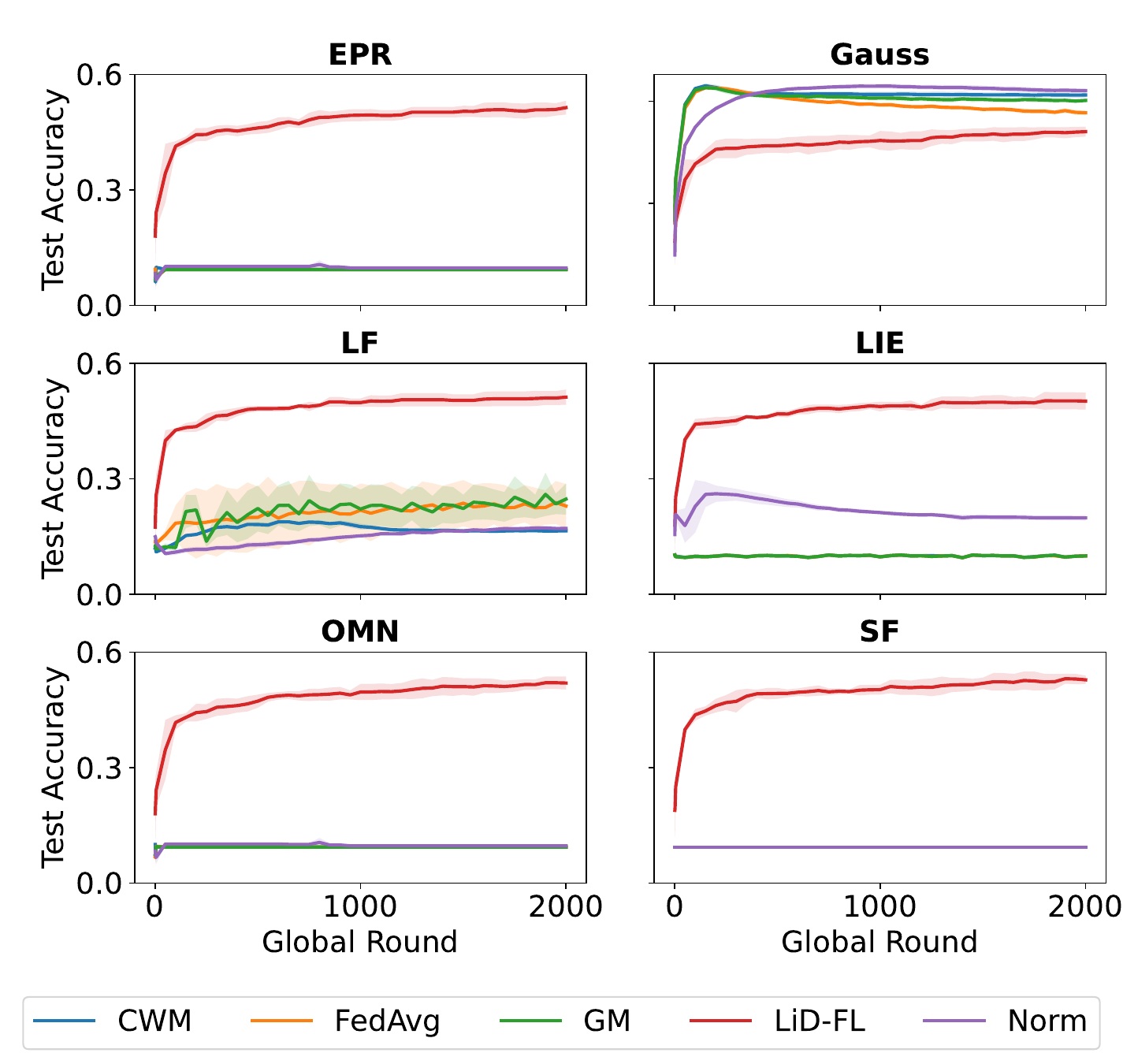}
 \caption{Test accuracy of different methods on CIFAR-10 at a Byzantine fraction of $0.6$, with the CNN global model.}
 \label{fig:CIFAR-10_0.6v}
\end{figure}

Figure \ref{fig:CIFAR-10_0.6v} illustrates the experimental results 
on CIFAR-10 at a Byzantine fraction of $0.6$. The results emphasize that when exposed to EPR, LF, LIE, OMN and SF attacks, LiD-FL much outperforms other methods. The test accuracy of LiD-FL is more than $0.50$ for these attacks, while the test accuracy of other algorithms all remain below $0.3$. Note that under the Gauss attack, the LiD-FL shows a lower convergence speed than baselines. 

\paragraph{Voting attacks.} Table \ref{tab:voting_atk} presents the test accuracy of LiD-FL under two voting attacks, where ``W" and ``R" represent the Worst voting attack and Random voting attack, respectively. LiD-FL achieves consistently high performance against both voting attacks, demonstrating remarkable robustness. 
In most cases, the Random voting attack is less effective than the Worst voting attack.
This is because when the new model is suboptimal but not poisoned, the Random voting scheme increases its chance of being optimized.

\begin{table}[!ht]
\centering
\renewcommand{\arraystretch}{1.2} 
\setlength{\tabcolsep}{3pt}
    \begin{tabular}{ccccccccc}
    \toprule
        Model & ATK & EPR & Gauss & LF & LIE & OMN & SF & Wst \\
         \midrule
         \multirow{2}{*}{LR} & W & \textbf{0.57} & 0.55 & \textbf{0.56} & 0.57 & \textbf{0.57} & 0.56 & 0.55  \\
         & R & 0.56 & \textbf{0.56} & \textbf{0.56} & \textbf{0.58} & \textbf{0.57} & \textbf{0.57} & \textbf{0.56}  \\ 
         \midrule
        \multirow{2}{*}{CNN} & W & 0.72 & \textbf{0.71} & 0.72 & \textbf{0.72} & \textbf{0.73} & 0.73 & \textbf{0.71}  \\
        & R & \textbf{0.73} & 0.69 & \textbf{0.73} & 0.60 & \textbf{0.73} & \textbf{0.75} & 0.60 \\ 
         \bottomrule
    \end{tabular}
    \caption{Test accuracy of LiD-FL under different voting attacks on FEMNIST at a Byzantine fraction of 0.6.}
    \label{tab:voting_atk}
\end{table}

\section{Conclusion}
 In this paper, we propose a list decodable federated learning framework called LiD-FL. Unlike traditional robust FL algorithms, LiD-FL maintains a list of global models on the server, ensuring that at least one of them is reliable in adversarial environments. One significant advantage of LiD-FL is that it has no strict restriction on the fraction of malicious clients. Our theoretical analysis proves a convergence theorem for strongly convex and smooth optimization objectives. 
 Numerical simulations on two datasets demonstrate the effectiveness, robustness, and stability of LiD-FL.

 As a novel federated learning scheme, LiD-FL has great potential for future study. Promising directions for further research include the theoretical guarantee of LiD-FL for non-convex optimization tasks, extending LiD-FL to heterogeneous local data settings, and developing appropriate aggregation rules for LiD-FL.
 
\clearpage
\bibliographystyle{plainnat}
\bibliography{reference}
\clearpage
\appendix
\section*{Organization of the Appendix}
The appendices are structured as follows:
\begin{itemize}
  \item \textbf{Appendix \ref{sec:app_anal}} introduces additional information about the analysis.
  \item \textbf{Appendix \ref{sec:app_expe}} presents the implementation details of experiments, full experimental results, the stability evaluation of LiD-FL, and a preliminary study on LiD-FL with aggregators.
\end{itemize}

\section{Additional Information on the Analysis}
\label{sec:app_anal}

\subsection{Properties of the Loss Function}\label{subsec:app_loss_properties}
In this section, we define the $\alpha$-strongly convex and $\beta$-smooth function, to supplement the analysis in Section 4 of the main paper. These two properties are standard and widely used in learning algorithms \citep{Garrigos24}.

\begin{definition} 
\textbf{$\alpha$-strongly convex.} For a function $f: \bbR^d \to \bbR$ and $\alpha > 0$, $f$ is $\alpha$-strongly convex if for any $x,y \in \bbR^d, b \in [0,1]$ $f$ satisfies:
 $$
 bf(x) + (1-b)f(y) \ge f(bx+(1-b)y) + \alpha\frac{b(1-b)}{2}{||x-y||}^2,
 $$
 where $\alpha$ is the strong convexity coefficient of $f$.
\end{definition}

\begin{definition} 
\textbf{$\beta$-smooth.} For a differentiable function $f: \bbR^d \to \bbR$ and $\beta > 0$, $f$ is $\beta$-smooth if for any $x,y \in \bbR^d$ $f$ satisfies:
$$
|| \nabla f(x) - \nabla f(y)|| \le \beta||x-y||,
$$
where $\nabla f(x)$ represents the gradient of $f$ at $x$.
\end{definition} 
We assume that the loss function of federated learning satisfies the two properties to facilitate our analysis.

\subsection{Properties of the Gradient Noise}\label{subsec:app_gradient_noise} 

We assume that the gradient noise in SGD has bounded variance to support our analysis, an assumption widely adopted in machine learning research  \citep{Ghadimi2013,Bottou2018,farhadkhani22ResilientAvgMom}.

\begin{definition} 
\textbf{Bounded variance of gradient noise.} For any parameter $\theta$, the gradient noise at $\theta$ in SGD process is $\xi(\theta)$. There exists $\sigma \geq 0$ satisfying
$$
\E[\xi(\theta)]=0,
\mathrm{Var}(\xi(\theta)) \leq \sigma^2.
$$
\end{definition}

\subsection{Validation Oracle}\label{subsec:app_validation}
In the analysis of Section 4, we assume that each client has access to a $(\eta,p)$-accurate validation oracle. We give an example of such a validation oracle.

We consider a validation oracle that samples a validation set $V$ from the same data distribution $\calD$. This set $V$ contains $n'$ data points that do not appear in the training data. Then, it returns the reweighted empirical loss on the validation data as the estimation
$$
\hat{f}(w)= \frac{1}{n'}\sum_{x \in V} f_x(w),
$$
where $f_x(w)$ is the loss at point $x$. 

Assume the loss of any model at any data point is bounded in $[0,H]$. By Hoeffding's inequality, we have
$$
\Pr\{|\hat{f}(w) - \mathbb{E}_{X \sim \calD}[f(w)]| \geq \eta/2 \} \leq \exp(-2n'(\eta/2)^2 / H^2).
$$
By taking $n' = 2H^2 \ln (2/(1-p))/  \eta^2$, we have $|\hat{f}(w) - \mathbb{E}_{X \sim \calD}[f(w)]| \geq \eta/2$ with probability at most $(1-p)/2$.

Suppose the number of training data is at least $n \geq 2H^2 \ln (2/(1-p))/  \eta^2$. Similarly, we have the empirical loss function satisfies $|f(w) - \mathbb{E}_{X \sim \calD}[f(w)]| \geq \eta/2$ with probability at most $(1-p)/2$. By combining two bounds, we have the estimation given by the validation oracle satisfies $|\hat{f}(w)- f(w)| \leq \eta$ with probability at least $p$.

\section{Details about Experiments}
\label{sec:app_expe}
\subsection{Running Environment}
All experiments are run on a cloud virtual machine, whose configurations are as follows:
\begin{itemize}
    \item CPU: a 32-core Intel Xeon Gold 6278@2.6G CPU, with 128GB of memory
    \item GPU: two 16GB Quadro RTX 5000 GPUs and four NVIDIA GeForce RTX 4090 GPUs
    \item Operating System: Ubuntu 18.04 server 64bit with Quadro Driver 460.73.01
    \item Software: CUDA 12.1, Python 3.8.16, torch 1.10.0+cu111
\end{itemize}

\subsection{Byzantine Attacks}
\label{subsec:byzantine_atks}
\begin{enumerate}
  \item \textbf{Empire (EPR)} \citep{xie2020empire}: the Empire attack attempts to make the inner product of the aggregated update and true update negative, which sets the corrupted updates to be the average of honest updates multiplied by $-1.1k / (m-k)$.
  
  \item \textbf{Gauss}: the Gaussian attack replaces the model updates on corrupted clients with random Gaussian vectors with the same variance. Denote the true model update on Byzantine client $c_j, j \in B$ by $u^{(j)}$, $c_j$ samples a vector $\Tilde{u}^{(j)}$ from the distribution $\calN(0, {\sigma^{(j)}}^2I)$ and sends it to the server, where $\sigma^{(j)}$ is the standard deviation of $u^{(j)}$.
  
  \item \textbf{Label Flip (LF)}: the label flipping attack, a typical data poison attack which changes the label of training samples on Byzantine clients. In particular, the flipping process is denoted by $y' = M-y-1$, where $M$ is the total number of classes and $y \in \{0,1,\dots,M-1\}$ is the true label.

  \item \textbf{Little (LIE)} \citep{baruchLittleEnough2019}: the Little attack introduces tiny perturbations within the variance of honest updates. Specifically, the Byzantine clients send $\Tilde{u}^{(j)} = \bar{u} - z \sigma, j \in \calB$ to the server, where the mean and standard deviation of honest updates are $\bar{u}$, $\sigma$ respectively.

  \item \textbf{Omniscient (OMN)} \citep{krumOmn2017}: the omniscient attack is designed to destroy naive average aggregation, where the mean update of all clients could be arbitrary vector $u$ by proposing $\Tilde{u}^{(j)} = \frac{1}{|\calB|} (u \cdot m -  \sum_{i \in \calH} {u^{(i)}}), j \in \calB $.

  \item \textbf{Sign Flip (SF)}: the sign flipping attack increases the value of global loss function by sending opposite model updates. Specifically, the Byzantine client $c_j$ flips the sign of its true model update $u^{(j)}$ and submits $\Tilde{u}^{(j)} = -u^{(j)}$ to the server.
  
\end{enumerate}
\subsection{Model Architecture and Hyperparameters}
\label{subsec:mdl_paras}
\paragraph{Model architecture.} To introduce the structure of model, we adopt the compact notation in \citep{MhamdiGR21}: 

C(\#channels) represents a convolutional layer with a kernel size $5$, R represents the ReLU activation layer, M is the max pooling layer with a kernel size of $2$ and a stride of $2$, L(\#outputs) is on behalf of a fully-connected linear layer, and S represents the log-softmax function.

In the simulation, we train a CNN model on FEMNIST whose architecture can be formulated as:

\begin{center}
Input(1,32×32)-C(32)-R-M-C(64)-R-M-L(62)-S.
\end{center}

The CNN model on CIFAR-10 are the same as that on FEMNIST, except that the input images are $3$ channels and the last linear layer is L(10). 

\paragraph{Hyperparameter settings.} 
For each hyperparameter, we determine its value through a grid search, and select the value with which LiD-FL achieves best performance on the validation set without attacks. The search ranges of the learning rate, momentum coefficient, batch size and the number of local training rounds are [0.0001, 0.001, 0.01, 0.1, 0.3], [0.1, 0.5, 0.9], [16, 32, 64], and [5, 10, 15, 20, 25] respectively.

For the FEMNIST dataset, we use the stochastic gradient descent optimizer (SGD), with a learning rate of $0.01$ and a momentum coefficient of $0.9$. The initial momentum of local model is set to $0$. The number of local training rounds is $25$, and the batch size is $32$ for LR and $64$ for CNN, respectively. We use the minimal size of the global model list, i.e. $q = \lfloor 1 / \gamma \rfloor $. The number of global training rounds is $1500$. 
For the CIFAR-10 dataset, the hyperparameters are the same as the CNN model on FEMNIST, except that the number of local training rounds is $20$ with a batch size of $32$, and the number of global training rounds is $2000$.

\subsection{Full Experimental Results}
\label{subsec:full_res}
In this section we display the full experimental results, completing Section 5 of the main paper. We conduct experiments on three pairs dataset-model: FEMNIST-LR, FEMNIST-CNN, CIFAR-10-CNN. On FEMNIST, we simulate $3$ levels of Byzantine ratio to evaluate the performance of LiD-FL. For CIFAR-10, we choose the level $0.6$ to validate the effectiveness of LiD-FL. We use the test accuracy at the last training round as the evaluation criteria. The full results are presented in Table \ref{tab:femnist_LR_full}, Table \ref{tab:femnist_CNN_full} and Table \ref{tab:cifar10_cnn_corp0.6}, with each cell records the average and standard deviation of testing accuracy across $5$ runs. 

LiD-FL shows excellent effectiveness towards Byzantine federated learning, as it obtains highest worst test accuracy across various adversarial attacks at each Byzantine rate on all dataset-model pairs. Meanwhile, the worst test accuracy of baselines are below $0.1$ in most cases. Besides, LiD-FL achieves best test accuracy against the Empire, LF, LIE, OMN and SF attacks in all cases. We also observe that LiD-FL demonstrates remarkable robustness against attacking strategies. The fluctuation of its test accuracy among different attacks remains below $0.03$ at the first two corruption rate levels, significantly outperforming other methods.

Figure \ref{fig:femnist_lr_full} and Figure \ref{fig:femnist_cnn_full} illustrate the performance comparison of different algorithms on FEMNIST during training process. The results obviously exhibit the advantage of LiD-FL over other aggregation rules.

\begin{table}[!ht]
    \centering
    \renewcommand{\arraystretch}{1.2}
    \small
    \setlength{\tabcolsep}{2pt} 
    \begin{tabular}{ccccccccc}
    \toprule
        Rate & Method & Empire & Gauss & LF & LIE & OMN & SF & Worst \\
        \midrule
       \multirow{5}{*}{0.4} & CWM & 0.14 ± 0.01 & 0.59 ± 0.01 & 0.43 ± 0.01 & 0.46 ± 0.00 & 0.14 ± 0.01 & 0.40 ± 0.04 & 0.14 \\ 
        ~ & FedAvg & 0.02 ± 0.00 & \textbf{0.63 ± 0.01} & 0.47 ± 0.02 & 0.08 ± 0.01 & 0.02 ± 0.00 & 0.07 ± 0.02 & 0.02 \\ 
        ~ & GM & 0.17 ± 0.01 & \textbf{0.63 ± 0.01} & 0.52 ± 0.02 & 0.10 ± 0.01 & 0.02 ± 0.00 & 0.13 ± 0.02 & 0.02 \\ 
        ~ & Norm & 0.24 ± 0.01 & \textbf{0.63 ± 0.01} & 0.56 ± 0.02 & \textbf{0.58 ± 0.01} & 0.24 ± 0.01 & 0.53 ± 0.01 & 0.24 \\ 
        ~ & LiD-FL & \textbf{0.58 ± 0.04} & 0.57 ± 0.03 & \textbf{0.58 ± 0.03} & \textbf{0.58 ± 0.03} & \textbf{0.58 ± 0.03} & \textbf{0.58 ± 0.04} & \textbf{0.57} \\
        \midrule
        \multirow{5}{*}{0.6} & CWM & 0.02 ± 0.00 & 0.58 ± 0.02 & 0.04 ± 0.00 & 0.04 ± 0.01 & 0.02 ± 0.00 & 0.03 ± 0.00 & 0.02 \\ 
        ~ & FedAvg & 0.02 ± 0.00 & \textbf{0.62 ± 0.01} & 0.09 ± 0.01 & 0.06 ± 0.01 & 0.02 ± 0.00 & 0.02 ± 0.00 & 0.02 \\ 
        ~ & GM & 0.02 ± 0.00 & 0.61 ± 0.01 & 0.05 ± 0.01 & 0.05 ± 0.01 & 0.02 ± 0.00 & 0.02 ± 0.01 & 0.02 \\ 
        ~ & Norm & 0.01 ± 0.00 & 0.61 ± 0.01 & 0.02 ± 0.00 & 0.44 ± 0.02 & 0.01 ± 0.00 & 0.01 ± 0.01 & 0.01 \\ 
        ~ & LiD-FL & \textbf{0.57 ± 0.03} & 0.55 ± 0.03 & \textbf{0.56 ± 0.03} & \textbf{0.57 ± 0.03} & \textbf{0.57 ± 0.03} & \textbf{0.56 ± 0.03} & \textbf{0.55} \\ 
        \midrule
        \multirow{5}{*}{0.8} & CWM & 0.01 ± 0.01 & 0.53 ± 0.02 & 0.01 ± 0.00 & 0.04 ± 0.01 & 0.01 ± 0.01 & 0.02 ± 0.00 & 0.01 \\ 
        ~ & FedAvg & 0.01 ± 0.01 & \textbf{0.57 ± 0.02} & 0.02 ± 0.00 & 0.04 ± 0.01 & 0.01 ± 0.01 & 0.02 ± 0.00 & 0.01 \\ 
        ~ & GM & 0.01 ± 0.01 & \textbf{0.57 ± 0.02} & 0.01 ± 0.00 & 0.04 ± 0.01 & 0.01 ± 0.01 & 0.02 ± 0.00 & 0.01 \\ 
        ~ & Norm & 0.01 ± 0.01 & \textbf{0.57 ± 0.02} & 0.01 ± 0.00 & 0.15 ± 0.02 & 0.01 ± 0.01 & 0.01 ± 0.01 & 0.01 \\ 
        ~ & LiD-FL & \textbf{0.55 ± 0.01} & 0.49 ± 0.03 & \textbf{0.50 ± 0.03} & \textbf{0.53 ± 0.03} & \textbf{0.52 ± 0.02} & \textbf{0.53 ± 0.02} & \textbf{0.49} \\ 
         \bottomrule
    \end{tabular}
    \caption{Performance comparison on FEMNIST, with the LR global model.}
    \label{tab:femnist_LR_full}
\end{table}
\begin{table}[!ht]
    \centering
    \renewcommand{\arraystretch}{1.2} 
    \small
    \setlength{\tabcolsep}{2pt} 
    \begin{tabular}{ccccccccc}
    \toprule
        Rate & Method & Empire & Gauss & LF & LIE & OMN & SF & Worst \\ 
        \midrule
        \multirow{5}{*}{0.4} & CWM & 0.05 ± 0.00 & 0.48 ± 0.36 & 0.52 ± 0.02 & 0.36 ± 0.02 & 0.05 ± 0.01 & 0.05 ± 0.00 & 0.05 \\ 
        ~ & FedAvg & 0.05 ± 0.00 & 0.64 ± 0.30 & 0.48 ± 0.11 & 0.00 ± 0.00 & 0.05 ± 0.00 & 0.05 ± 0.00 & 0.00 \\ 
        ~ & GM & 0.04 ± 0.01 & 0.50 ± 0.37 & 0.50 ± 0.11 & 0.00 ± 0.00 & 0.05 ± 0.00 & 0.05 ± 0.00 & 0.00 \\ 
        ~ & Norm & 0.54 ± 0.05 & \textbf{0.80 ± 0.01} & 0.71 ± 0.02 & 0.70 ± 0.02 & 0.54 ± 0.05 & 0.05 ± 0.00 & 0.05 \\ 
        ~ & LiD-FL & \textbf{0.74 ± 0.03} & 0.74 ± 0.03 & \textbf{0.74 ± 0.03} & \textbf{0.76 ± 0.03} & \textbf{0.76 ± 0.03} & \textbf{0.74 ± 0.03} & \textbf{0.74} \\ 
        \midrule
        \multirow{5}{*}{0.6} & CWM & 0.05 ± 0.00 & 0.34 ± 0.36 & 0.14 ± 0.01 & 0.01 ± 0.00 & 0.05 ± 0.00 & 0.05 ± 0.00 & 0.01 \\ 
        ~ & FedAvg & 0.05 ± 0.00 & 0.33 ± 0.35 & 0.24 ± 0.12 & 0.01 ± 0.00 & 0.05 ± 0.00 & 0.05 ± 0.00 & 0.01 \\ 
        ~ & GM & 0.05 ± 0.00 & 0.19 ± 0.29 & 0.33 ± 0.12 & 0.01 ± 0.00 & 0.05 ± 0.00 & 0.05 ± 0.00 & 0.01 \\ 
        ~ & Norm & 0.00 ± 0.00 & \textbf{0.78 ± 0.02} & 0.06 ± 0.01 & 0.05 ± 0.00 & 0.00 ± 0.00 & 0.05 ± 0.00 & 0.00 \\ 
        ~ & LiD-FL & \textbf{0.72 ± 0.05} & 0.71 ± 0.04 & \textbf{0.72 ± 0.04} & \textbf{0.72 ± 0.03} & \textbf{0.73 ± 0.03} & \textbf{0.73 ± 0.03} & \textbf{0.71} \\ 
        \midrule
        \multirow{5}{*}{0.8} & CWM & 0.05 ± 0.00 & 0.34 ± 0.36 & 0.14 ± 0.01 & 0.01 ± 0.00 & 0.05 ± 0.00 & 0.05 ± 0.00 & 0.01 \\ 
        ~ & FedAvg & 0.05 ± 0.00 & 0.33 ± 0.35 & 0.24 ± 0.12 & 0.01 ± 0.00 & 0.05 ± 0.00 & 0.05 ± 0.00 & 0.01 \\ 
        ~ & GM & 0.05 ± 0.00 & 0.19 ± 0.29 & 0.33 ± 0.12 & 0.01 ± 0.00 & 0.05 ± 0.00 & 0.05 ± 0.00 & 0.01 \\ 
        ~ & Norm & 0.01 ± 0.00 & 0.60 ± 0.28 & 0.06 ± 0.01 & 0.05 ± 0.00 & 0.00 ± 0.00 & 0.05 ± 0.00 & 0.00 \\
        ~ & LiD-FL & \textbf{0.68 ± 0.04} & \textbf{0.71 ± 0.04} & \textbf{0.72 ± 0.04} & \textbf{0.58 ± 0.26} & \textbf{0.73 ± 0.03} & \textbf{0.73 ± 0.03} & \textbf{0.58} \\ 
        \bottomrule
    \end{tabular}
    \caption{Performance comparison on FEMNIST, with the CNN global model.}
    \label{tab:femnist_CNN_full}
\end{table}

\begin{table}[!ht]
    \centering
    \renewcommand{\arraystretch}{1.2} 
    \small
    \setlength{\tabcolsep}{3pt} 
    \begin{tabular}{cccccccc}
    \toprule
        Method & Empire & Gauss & LF & LIE & OMN & SF & Worst \\ 
        \midrule
        CWM & 0.09 ± 0.00 & 0.62 ± 0.01 & 0.16 ± 0.01 & 0.10 ± 0.00 & 0.09 ± 0.00 & 0.09 ± 0.00 & 0.09 \\ 
        FedAvg & 0.09 ± 0.00 & 0.57 ± 0.01 & 0.23 ± 0.06 & 0.10 ± 0.00 & 0.09 ± 0.00 & 0.09 ± 0.00 & 0.09 \\ 
        GM & 0.09 ± 0.00 & 0.60 ± 0.01 & 0.25 ± 0.04 & 0.10 ± 0.00 & 0.09 ± 0.00 & 0.09 ± 0.00 & 0.09 \\ 
        Norm & 0.10 ± 0.00 & \textbf{0.63 ± 0.01} & 0.17 ± 0.01 & 0.20 ± 0.01 & 0.10 ± 0.00 & 0.09 ± 0.00 & 0.09 \\
        LiD-FL & \textbf{0.51 ± 0.02} & 0.51 ± 0.01 & \textbf{0.51 ± 0.02} & \textbf{0.50 ± 0.02} & \textbf{0.52 ± 0.02} & \textbf{0.53 ± 0.01} & \textbf{0.50} \\ 
        \bottomrule
    \end{tabular}
    \caption{Performance comparison on CIFAR-10 at a Byzantine fraction of 0.6, with the CNN global model.}
    \label{tab:cifar10_cnn_corp0.6}
\end{table}

\begin{figure}[!ht]
\centering  
\setlength{\abovecaptionskip}{2mm}
    \begin{subfigure}[b]{0.95\textwidth}
    \centering
    \includegraphics[width=0.95\textwidth]{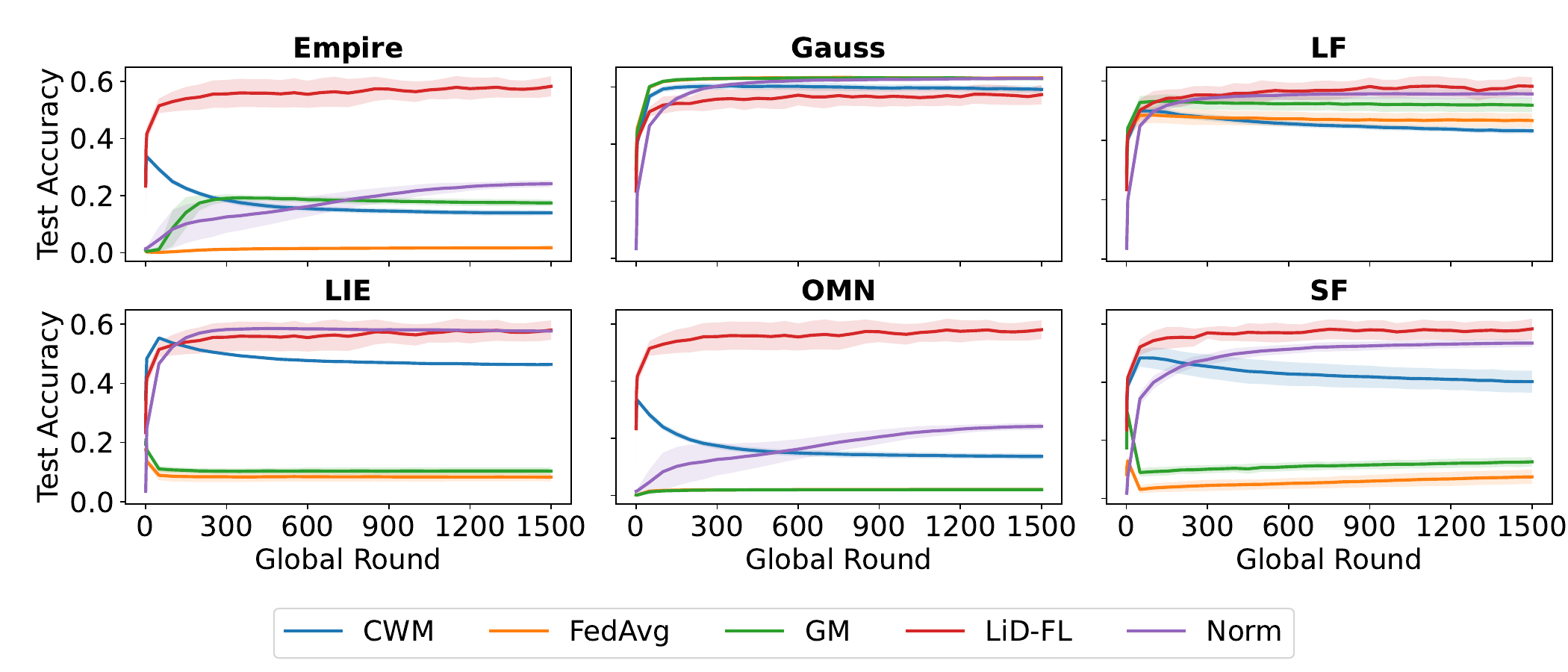}
    \caption{Byzantine fraction of $0.4$.}
    \end{subfigure}  
    \hfill
    \begin{subfigure}[b]{0.95\textwidth}
    \centering
    \includegraphics[width=0.95\textwidth]{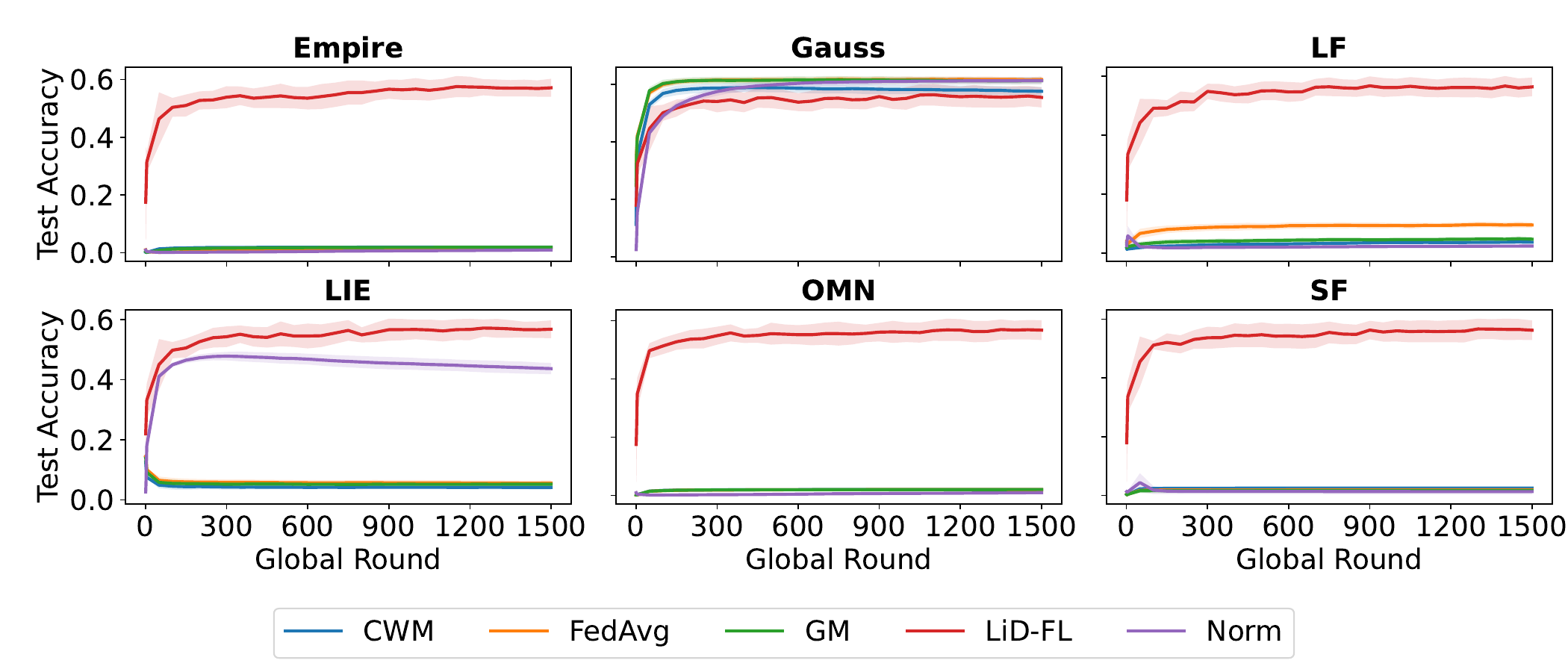}
    \caption{Byzantine fraction of $0.6$.}
    \end{subfigure}  
    \hfill
    \begin{subfigure}[b]{0.95\textwidth}
    \centering
    \includegraphics[width=0.95\textwidth]{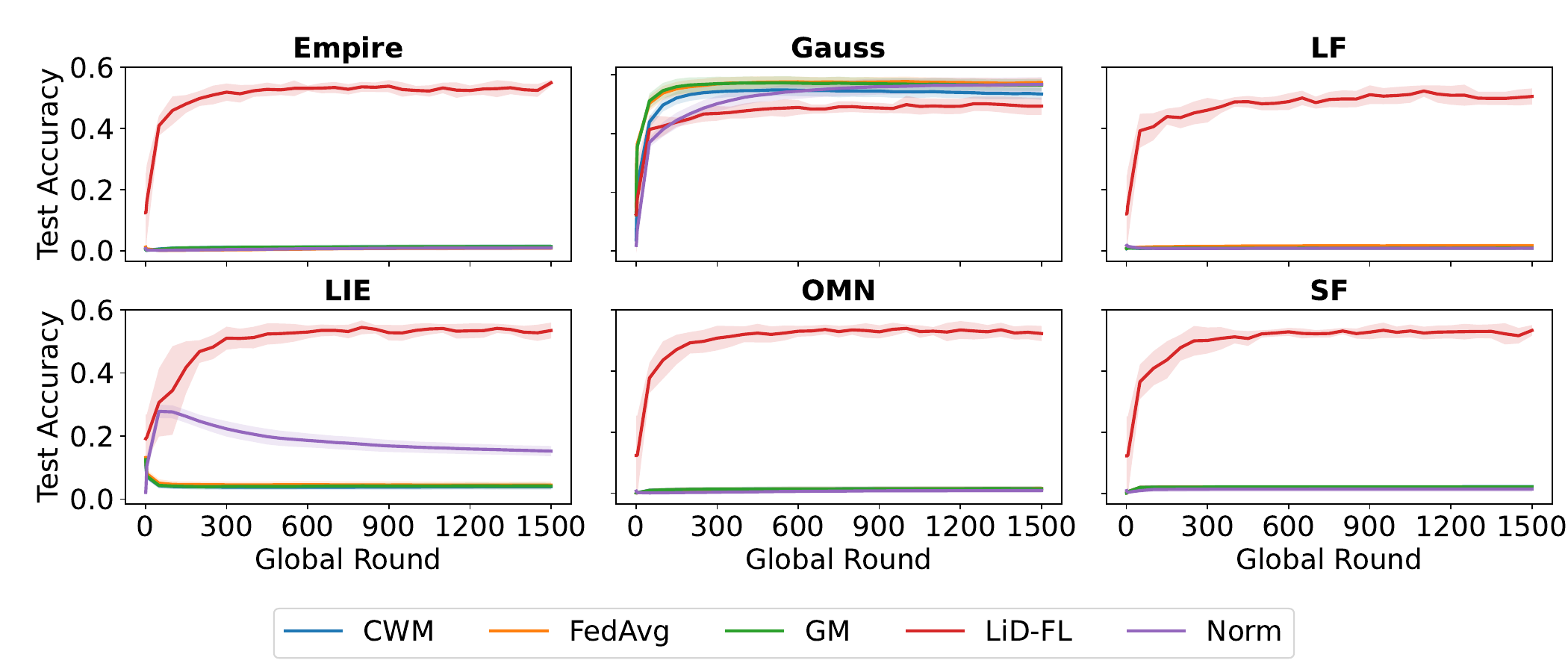}
    \caption{Byzantine fraction of $0.8$.}
    \end{subfigure}  
 \caption{Test accuracy of various methods on FEMNIST, with the LR global model.}
 \label{fig:femnist_lr_full}
\end{figure}

\begin{figure}[!ht]  
\centering  
\setlength{\abovecaptionskip}{2mm}
    \begin{subfigure}[b]{0.95\textwidth} 
    \centering
        \includegraphics[width=0.95\textwidth]{table_figs/femnist_lr_0.4.pdf}
        \caption{Byzantine fraction of $0.4$.}  
    \end{subfigure}  
    \hfill
    \begin{subfigure}[b]{0.95\textwidth}  
    \centering
        \includegraphics [width=0.95\textwidth]{table_figs/femnist_lr_0.6.pdf}
        \caption{Byzantine fraction of $0.6$.}  
    \end{subfigure}  
    \hfill
    \begin{subfigure}[b]{0.95\textwidth}  
    \centering
    \includegraphics[width=0.95\textwidth]{table_figs/femnist_lr_0.8.pdf}
    \caption{Byzantine fraction of $0.8$.}  
    \end{subfigure}  
    \caption{Test accuracy of various methods on FEMNIST, with the CNN global model.}  
    \label{fig:femnist_cnn_full}  
\end{figure}  

\subsection{Stability of LiD-FL}
\label{subsec:stability}
To demonstrate the influence of list size $q$ on the performance of LiD-FL, we conduct a grid search with $q$ in $[2,3,4,5]$. The experimental setup is the same as in Section \ref{subsec:mdl_paras}, at a Byzantine fraction of $0.6$. For simplicity, we choose one representative attack for data poisoning and another for model poisoning. The results are illustrated in Table \ref{tab:list_size} and Figure \ref{fig:listsize}. The performance of LiD-FL is relatively stable with respect to the list size on both models. Furthermore, larger list sizes result in smaller standard deviations across the test accuracy of $5$ runs, indicating better stability.
 \begin{table}[!ht]
    \centering
    \renewcommand{\arraystretch}{1.2} 
    \setlength{\tabcolsep}{3pt}
    \begin{tabular}{cccccc}
    \toprule
        Model &  List Size & LF\_random & LF\_worst & SF\_random & SF\_worst \\ \midrule
        \multirow{4}{*}{LR} & 2 & 0.56 ± 0.03 & 0.56 ± 0.03 & 0.57 ± 0.02 & 0.56 ± 0.03 \\ 
        ~ & 3 & 0.57 ± 0.03 & 0.56 ± 0.03 & 0.58 ± 0.03 & 0.55 ± 0.00 \\ 
        ~ & 4 & 0.57 ± 0.02 & 0.56 ± 0.01 & 0.55 ± 0.02 & 0.56 ± 0.01 \\ 
        ~ & 5 & 0.57 ± 0.01 & 0.56 ± 0.01 & 0.57 ± 0.01 & 0.56 ± 0.00 \\ 
        \midrule
        \multirow{4}{*}{CNN} & 2 & 0.73 ± 0.02 & 0.72 ± 0.04 & 0.75 ± 0.02 & 0.73 ± 0.03 \\ 
        ~ & 3 & 0.74 ± 0.02 & 0.73 ± 0.01 & 0.74 ± 0.04 & 0.73 ± 0.02 \\ 
        ~ & 4 & 0.74 ± 0.03 & 0.73 ± 0.01 & 0.74 ± 0.03 & 0.74 ± 0.01 \\ 
        ~ & 5 & 0.74 ± 0.01 & 0.74 ± 0.00 & 0.73 ± 0.02 & 0.75 ± 0.00 \\ \bottomrule
    \end{tabular}
    \caption{Test accuracy of LiD-FL with different list size on FEMNIST at a Byzantine fraction of $0.6$.}
    \label{tab:list_size}
\end{table}
\begin{figure}[!ht]
\setlength{\abovecaptionskip}{2mm}
     \centering
\includegraphics[width=0.8\textwidth]{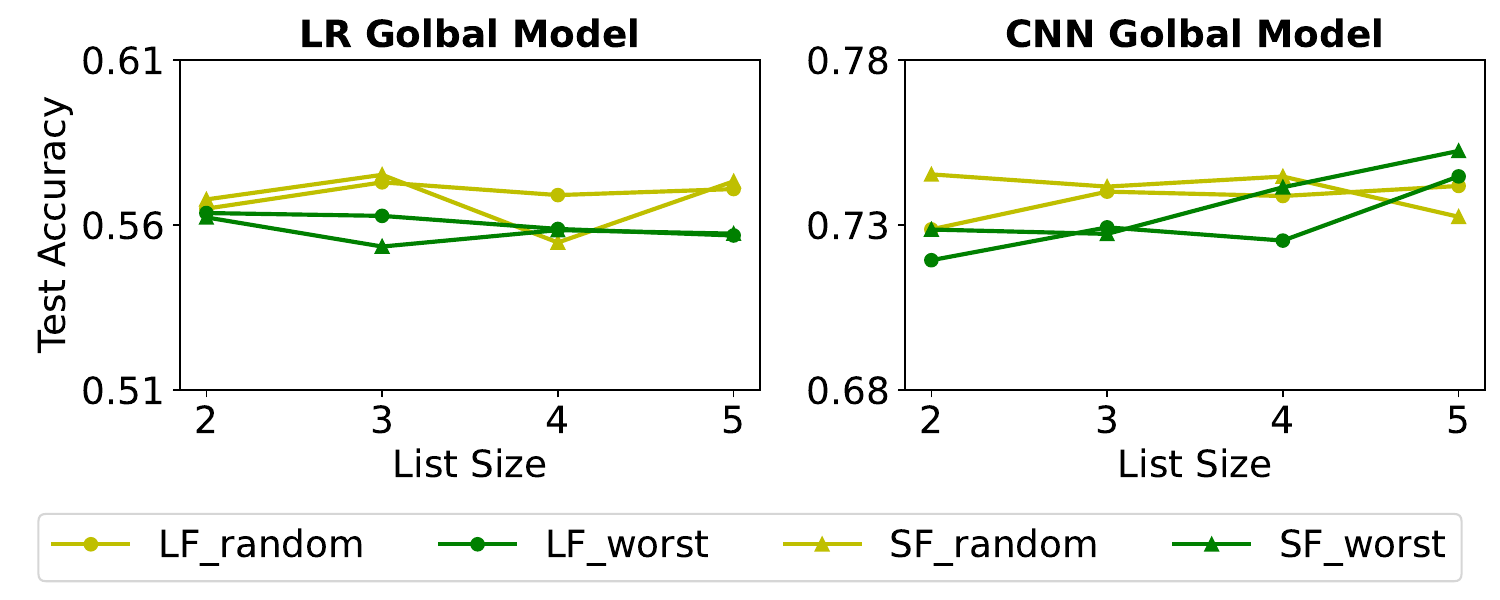}
 \caption{Performance comparison of LiD-FL with different list size on FEMNIST at a Byzantine fraction of $0.6$.}
 \label{fig:listsize}
\end{figure}

\subsection{LiD-FL with Aggregators}
\label{subsec:listDec_aggr}
We incorporate an aggregator into LiD-FL, aiming to improve its data parallelism. The aggregator $h$ is a simple scheme for mean estimation, which takes $m$ high-dimensional vectors as input and outputs a vector with the same shape. Additionally, The aggregator needs to know the number of non-Byzantine clients $k$. We design two aggregation rules, the random $k$ nearest neighbors aggregation (\textbf{RKNN}) and the minimum enclosing ball aggregation (\textbf{MEB}), and conduct a preliminary experiment to evaluate the performance of LiD-FL with aggregators.

\paragraph{The RKNN aggregation.} Let the initial point set equals to the input set $A_0 = \{u_1, \dots, u_m\}$. The algorithm starts by choosing a starting point $u_i, i \in [|A|]$ randomly from $A$, then compute the average of the nearest $k$ vectors from $u_i$ in $A$ as one candidate. Then the selected $k$ points are removed from $A$. These steps are repeated until no remaining points in $A$. Then one aggregated update is picked from the candidates at random as the output. 

\paragraph{The MEB aggregation.} The skeleton of the MEB aggregation rule is similar to RKNN. It computes $\lceil{m/k}\rceil$ candidates and subsequently outputs one of them randomly. In each iteration, MEB identifies a set of $k$ points enclosed in an approximate minimum radius ball, and calculates the average of the inner points as one candidate. The search for the minimum enclosing ball relies on a heuristic approach, which enumerates every point $u_i$ for $i \in [|A|]$, finds its $k$-th nearest neighbor, and records the distance between them as the radius of the ball centered at $u_i$. Then the ball with the smallest radius is selected. Next, MEB removes the $k$ inner points from $A$ and proceeds to the next iteration.

The pseudocode of aggregation rules are offered in Algorithm \ref{alg:rknn} and Algorithm \ref{alg:meb}.

 We conduct experiments on FEMNIST using the same hyperparameter settings as described in Section \ref{subsec:mdl_paras}. The Byzantine fraction is $0.6$ and the voting attack is Worst. The performance comparison between naive LiD-FL and LiD-FL with aggregators is displayed in Table \ref{tab:liDfl_aggr_lr}, Table \ref{tab:liDfl_aggr_cnn} and Figure \ref{fig: listDec_aggr}. Table \ref{tab:liDfl_aggr_lr} and Table \ref{tab:liDfl_aggr_cnn} punctuate the effectiveness of aggregators, with which the test accuracy of LiD-FL is improved overall. For the LR global model, both aggregators enhance the performance of LiD-FL. For the CNN global model, LiD-FL with both aggregators outperforms naive LiD-FL against Empire, Guass, LF and SF attacks. However, the performance of LiD-FL with MEB is inferior in the face of LIE and OMN attacks.
 In addition, LiD-FL with aggregators often shows better stability, supported by smaller standard deviation of test accuracy than naive LiD-FL. These phenomena require further in-depth study. We leave the problem of finding optimal aggregation rules for LiD-FL open.

In conclusion, employing aggregators is a valid way to enhance the performance, stability and efficiency of LiD-FL while remaining its robustness, which is an interesting direction for future research.


 
\begin{table}[!ht]
    \centering
    \renewcommand{\arraystretch}{1.2} 
    \small  
    \setlength{\tabcolsep}{2pt}  
    \begin{tabular}{cccccccc}
    \hline
    \toprule
        Method & Empire & Gauss & LF & LIE & Omn & SF & Worst \\ \midrule
        LiD-FL & 0.57 ± 0.03 & 0.55 ± 0.03 & 0.56 ± 0.03 & 0.57 ± 0.03 & 0.57 ± 0.03 & 0.56 ± 0.03 & 0.55 \\ 
        LiD-FL+MEB & 0.56 ± 0.04 & \textbf{0.60 ± 0.01} & \textbf{0.61 ± 0.01} & \textbf{0.61 ± 0.01} & 0.57 ± 0.03 & \textbf{0.60 ± 0.01} & 0.56 \\ 
        LiD-FL+RKNN & \textbf{0.58 ± 0.03} & \textbf{0.60 ± 0.02} & \textbf{0.61 ± 0.01} & \textbf{0.61 ± 0.01} & \textbf{0.58 ± 0.03} & \textbf{0.60 ± 0.01} & \textbf{0.58} \\ \bottomrule
    \end{tabular}
    \caption{Performance comparison between naive LiD-FL and LiD-FL plus aggregator on FEMNIST with the LR model.}
    \label{tab:liDfl_aggr_lr}
\end{table}

\begin{table}[!ht]
    \centering
    \renewcommand{\arraystretch}{1.2} 
    \small  
    \setlength{\tabcolsep}{2pt}  
    \begin{tabular}{ccccccccc}
    \hline
    \toprule
        Method & Empire & Gauss & LF & LIE & Omn & SF & Worst \\ \midrule
        LiD-FL & 0.72 ± 0.05 & 0.71 ± 0.04 & 0.72 ± 0.04 & 0.72 ± 0.03 & 0.73 ± 0.03 & 0.73 ± 0.03 & 0.71 \\ 
        LiD-FL+MEB & 0.73 ± 0.05 & \textbf{0.77 ± 0.01} & \textbf{0.77 ± 0.01} & 0.63 ± 0.29 & 0.70 ± 0.07 & \textbf{0.77 ± 0.01} & 0.63 \\ 
        LiD-FL+RKNN & \textbf{0.77 ± 0.02} & \textbf{0.77 ± 0.01} & \textbf{0.77 ± 0.01} & \textbf{0.77 ± 0.01} & \textbf{0.74 ± 0.02} & 0.77 ± 0.01 & \textbf{0.74} \\ \bottomrule
    \end{tabular}
    \caption{Performance comparison between naive LiD-FL and LiD-FL plus aggregator on FEMNIST with the CNN model.}
    \label{tab:liDfl_aggr_cnn}
\end{table}
\begin{figure}[!ht]
\centering  
\setlength{\abovecaptionskip}{2mm}
    \begin{subfigure}[b]{0.95\textwidth}
    \centering
    \includegraphics[width=0.95\textwidth]{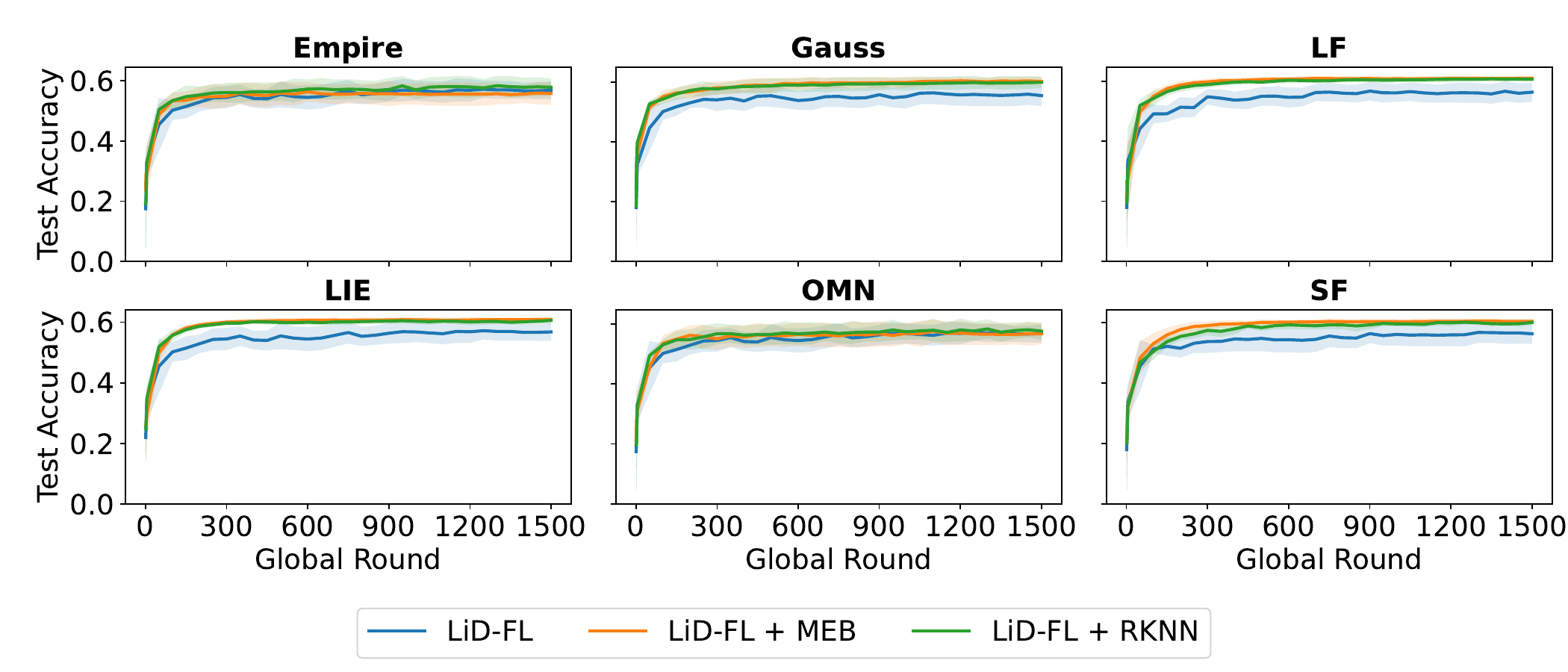}
    \caption{LR Global Model.}
    \end{subfigure}  
    \hfill
    
    \begin{subfigure}[b]{0.95\textwidth}
    \centering
    \includegraphics[width=0.95\textwidth]{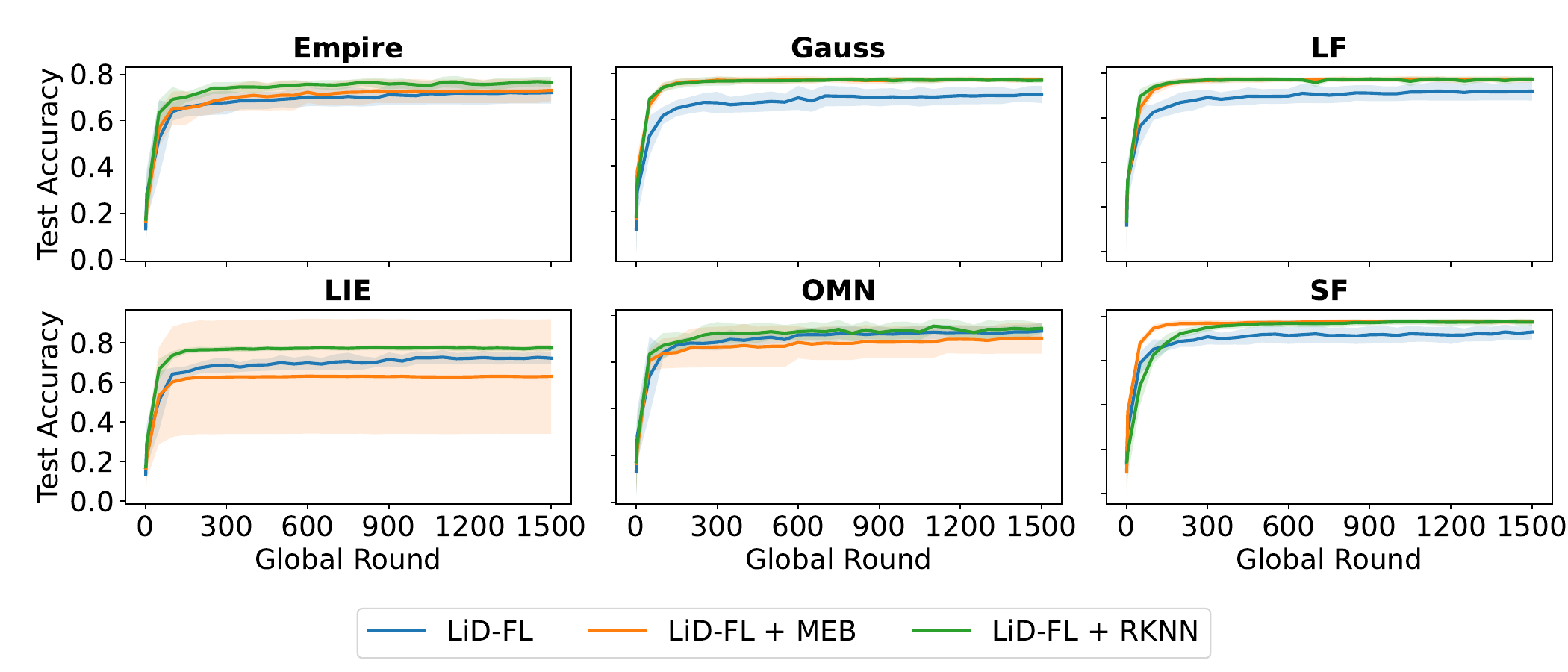}
    \caption{CNN Global Model.}
    \end{subfigure}
    \caption{Performance comparison between naive LiD-FL and LiD-FL with aggregators.}
    \label{fig: listDec_aggr}
\end{figure} 

\begin{algorithm}[!ht]
\caption{RKNN}
\label{alg:rknn}
\textbf{Input}: a set of $m$ vectors $U = \{u_1, \dots, u_m\}$ in $\bbR^d$, the number of normal points $k, k \le m$.
\begin{algorithmic}[1] 
\STATE Initialization: the point set $A \gets U$ and the candidate set $\calM \gets \emptyset $.
\WHILE{$|A| > 0$}
\STATE Select a starting point at random: $a_i \in A$.
\STATE Let $N(a_i)$ be the set of $k$ nearest points in $A$ to $a_i$, breaking ties arbitrarily.
\STATE $\calM \gets \calM \cup \{\frac{1}{|N(a_i)|} \sum_{u \in N(a_i)}{u}\} $.
\STATE $A \gets A \backslash N(a_i)$.
\ENDWHILE
\RETURN
One aggregated update $\hat{o}$ chosen randomly from $\calM$.
\end{algorithmic}
\end{algorithm}

\begin{algorithm}[!ht]
\caption{MEB}
\label{alg:meb}
\textbf{Input}: a set of $m$ vectors $U = \{u_1, u_2, \dots, u_m\}$ in $\bbR^d$, and the number of normal points $k, k \le m$.
\begin{algorithmic}[1] 
\STATE Initialization: the point set $A \gets U$ and the candidate set $\calM \gets \emptyset $.
\WHILE{$|A| > 0$}
\STATE Reset the set of radius: $R \gets \emptyset $.
\FOR{$i \in 1, 2, \dots, |A|$}
\STATE Let $N(a_i)$ be the set of $k$ nearest points in $A$ to $a_i$, breaking ties arbitrarily.
\STATE $r_i \gets \max_{u \in N(a_i)}{||u - a_i||}_2$.
\ENDFOR
\STATE $ j \gets \mathop{\arg\min}\limits_i\{r_i\}_{i=1}^{|A|}$.
\STATE $\calM \gets \calM \cup \{\frac{1}{|N(a_j)|} \sum_{u \in N(a_j)}{u}\} $.
\STATE $A \gets A \backslash N(a_j)$.
\ENDWHILE
\RETURN
One aggregated update $\hat{o}$ chosen randomly from $\calM$.
\end{algorithmic}
\end{algorithm}

\end{document}